\def\eqref#1{equation~\ref{#1}}
\def\1{\bm{1}}
\def\vmu{{\bm{\mu}}}
\def\mSigma{{\bm{\Sigma}}}
\DeclareMathAlphabet{\mathsfit}{\encodingdefault}{\sfdefault}{m}{sl}
\SetMathAlphabet{\mathsfit}{bold}{\encodingdefault}{\sfdefault}{bx}{n}
\newcommand{\R}{\mathbb{R}}
\newtheorem{theorem}{Theorem}
\newtheorem{assumption}{Assumption}
\newtheorem{remark}{Remark}
\newtheorem{example}{Example}
\newtheorem{definition}[theorem]{Definition}
\crefname{section}{Sec.}{Secs.}
\Crefname{section}{Section}{Sections}
\Crefname{table}{Table}{Tables}
\crefname{table}{Tab.}{Tabs.}
\definecolor{turquoise}{cmyk}{0.65,0,0.1,0.3}
\definecolor{purple}{rgb}{0.65,0,0.65}
\definecolor{dark_green}{rgb}{0, 0.5, 0}
\definecolor{orange}{rgb}{0.8, 0.6, 0.2}
\definecolor{red}{rgb}{0.8, 0.2, 0.2}
\definecolor{darkred}{rgb}{0.6, 0.1, 0.05}
\definecolor{blueish}{rgb}{0.0, 0.3, .6}
\definecolor{light_gray}{rgb}{0.7, 0.7, .7}
\definecolor{pink}{rgb}{1, 0, 1}
\definecolor{greyblue}{rgb}{0.25, 0.25, 1}
\DeclareMathOperator*{\argmax}{arg\,max}
\renewcommand{\paragraph}[1]{\vspace{1em}\noindent\textbf{#1}.}
\begin{document}

\title{Towards Understanding the Impact of Model Size on Differential Private Classification
}

\author{Yinchen Shen\\
Department of Mathematics\\
Sichuan University\\
Chengdu, Sichuan 610064, China \\
\texttt{1207102896@qq.com} \\
\and
Zhiguo Wang \\
Department of Mathematics\\
Sichuan University\\
Chengdu, Sichuan 610064, China \\
\texttt{wangzhiguo@scu.edu.cn} \\
\and
Ruoyu Sun \\
Department
of Industrial and Enterprise Systems
Engineering\\
University of Illinois at Urbana-Champaign \\
Urbana, IL  61801-2925, USA\\
\texttt{ruoyus@illinois.edu} \\
\and
Xiaojing Shen \\
Department of Mathematics\\
Sichuan University\\
Chengdu, Sichuan 610064, China \\
\texttt{shenxj@scu.edu.cn} \\
}

\maketitle
\begin{abstract}
Differential privacy (DP) is an essential technique for privacy-preserving. It was found that a large model trained for privacy preserving performs worse than a smaller model (e.g. ResNet50 performs worse than ResNet18). To better understand this phenomenon, we study high dimensional DP learning from the viewpoint of generalization. Theoretically, we show that for the simple Gaussian model \cite{schmidt2018adversarially}
with even small DP noise, if the dimension is large enough,
then the classification error can be as bad as the random guessing.
Then we propose a feature selection method to reduce the size of the model, based on a new metric which trades off the classification accuracy and privacy preserving. Experiments on real data support our theoretical results and demonstrate the advantage of the proposed method. % classification and privacy preserving procedure.
% \todo
\end{abstract}
\vspace{-0.3cm}
\section{Introduction}\label{sec_1}
Deep neural networks have made a series of remarkable achievements in the field of image recognition and classification, natural language processing. But training deep neural networks typically requires large and representative data to achieve high-performance \cite{gheisari2017survey}. Since the datasets often contain some sensitive information, such as medical records, location and purchase history, when we use these sensitive data to train a model without specific measures to the
secret information, individual privacy can be leaked \cite{fung2010privacy}. Thus, privacy-preserving is a crucial issue in deep learning.

\begin{figure*}[t]
%\hrule %\vspace*{8cm}
\vspace{-0.5cm}
 \centering
\begin{subfigure}[b]{0.45\textwidth}
\includegraphics[width=\textwidth]{./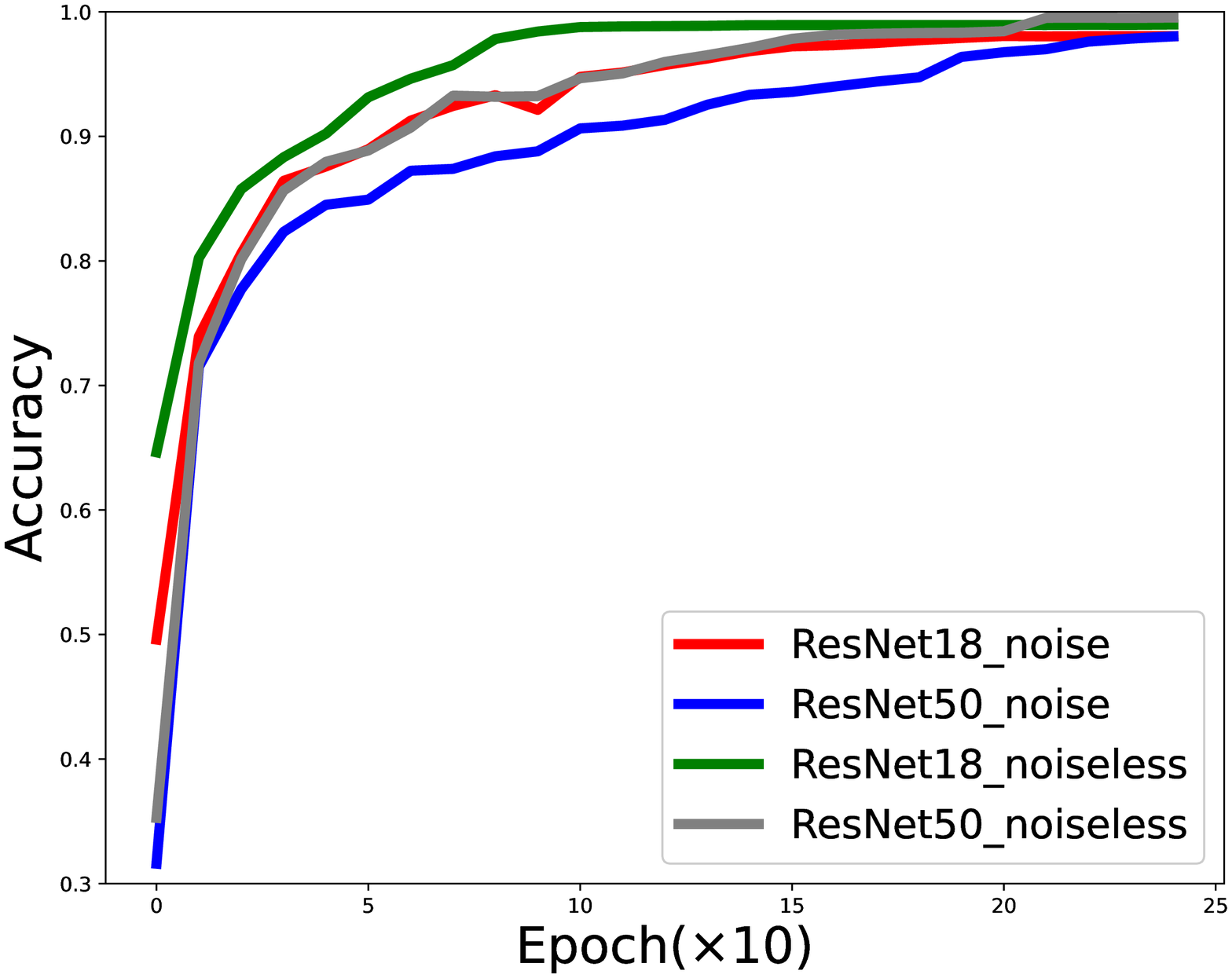}%\hrule
\caption{Accuracy in training set}\label{fig_1}
\end{subfigure}
\begin{subfigure}[b]{0.45\textwidth}
%\hrule %\vspace*{8cm}
\includegraphics[width=\textwidth]{./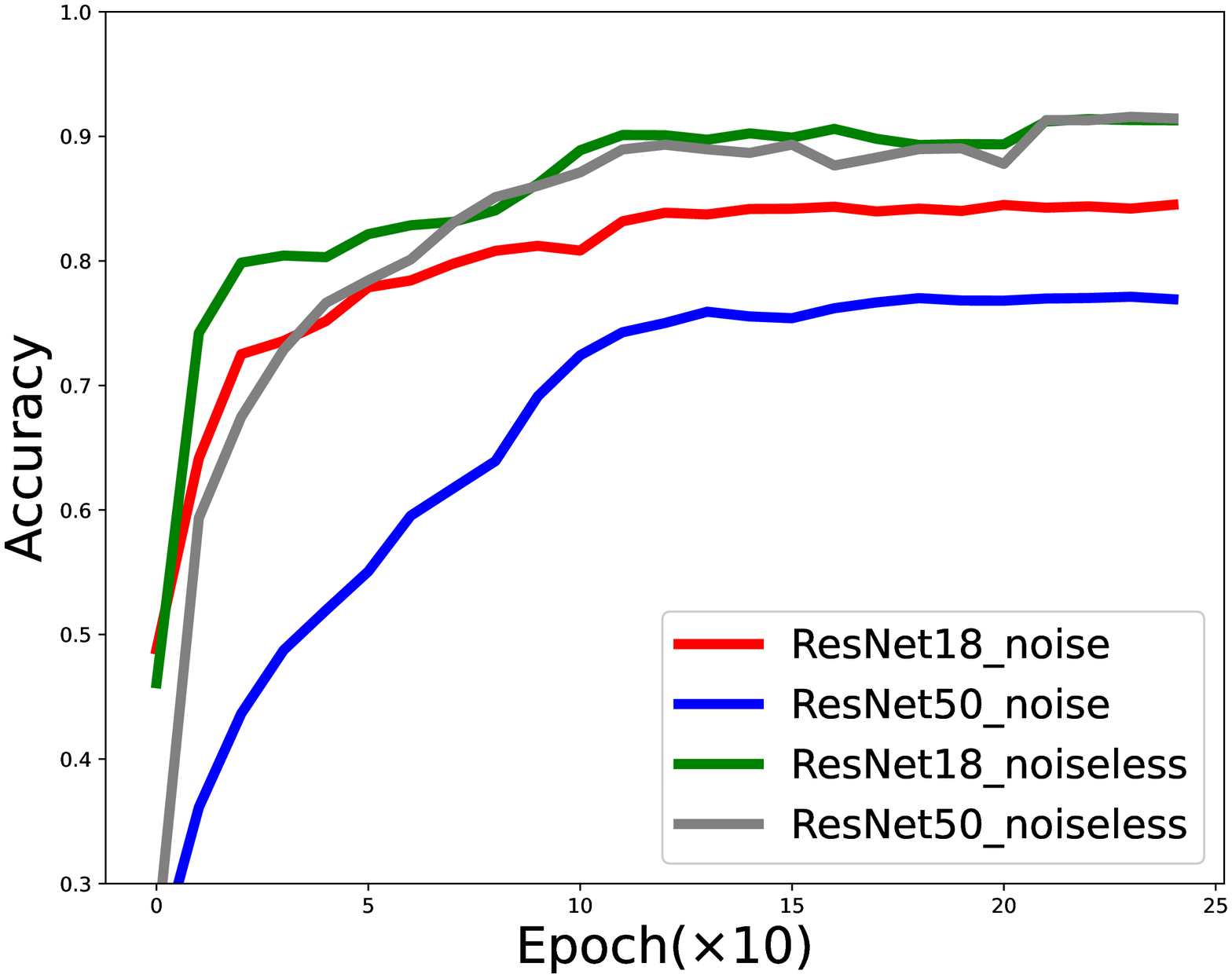}
\caption{Accuracy in test set}\label{fig_2}
\end{subfigure}
\caption{The performance of ResNet on CIFAR-10 by DP-SGD with $\varepsilon=5, \delta=0.0001$. (a) is the result in the training set, we see that both the ResNet 18 and ResNet 50 with noise or without noise obtain  98\% classification accuracy, respectively. (b) is the result in the test set, we see that the performance of ResNet 50 and ResNet 18 under noiseless condition is the same but ResNet50 causes much lower test accuracy than ResNet18.
}\label{fig_11}
\end{figure*}

One of the most popular techniques for privacy-preserving is $(\varepsilon, \delta)$-DP (differential privacy) that was first proposed by \cite{dwork2014algorithmic}.
A common mechanism to achieve DP is adding randomness (e.g. adding noise) to the data.
% while it provides a rigorous mathematical framework for preserving privacy, then
%Many works with DP have been proposed to protect individual privacy
 Due to the simplicity, the method of adding
randomness has been extended to many settings
including deep learning
\cite{6817512,goodfellow2016deep,dupuy2021efficient}.

% It is widely observed that larger neural networks perform better than smaller neural nets. (GPT-3 \cite{brown2020language}),
%In \cite{he2016deep}, they present a network called  ResNet to overcome this problem and it can gain accuracy from the considerably increased depth of networks.

% When we consider privacy-preserving, DP-stochastic gradient descent (SGD) \cite{abadi2016deep} adding Gaussian noise to gradient is popular for training neural networks  \cite{dupuy2021efficient}.
% Then
To understand how DP-SGD performs when the model
size changes, we trained DP-SGD on ResNet50 and ResNet18, respectively.
Fig.\ref{fig_11} shows the training
and test accuracy of ResNet50 and ResNet18,
trained by standard SGD (let us call them ResNet50-noiseless and ResNet18-noiseless) and
DP-SGD (call them ResNet50-noise and ResNet18-noise) respectively.
The left figure shows that
the training accuracy differs only a little
for all four models.
The right figure shows
that the test accuracy of ResNet50-noise is significantly lower
than that of ResNet18-noise.
This figure indicates two things:
first, the added noise
in DP-SGD is more detrimental to large
models than small models;
second, the bad performance of ResNet50-noise
is mainly due to generalization issue,
not optimization issue.

% a degradation problem has been exposed by adding noise. However, ResNet50 has lower test accuracy than ResNet18.
% Similar phenomena on MNIST data with convolutional neural network (CNN) is presented in Appendix.

The above observation motivates us to ask the following question:
\begin{align*}
    &\text{\emph{Why do larger models under DP cause}}\\ &\text{\emph{ lower classification accuracy?}}
\end{align*}
We will answer this question from the generalization aspects of differential private learning.
% since the larger model leads to higher test error rather than training error.
In addition, we propose to select a subset of features to trade off the classification accuracy and privacy-preserving.

\subsection{Our Contributions}
\begin{itemize}
    \item \textbf{Generalization bound.} We analyze generalization error bound in a simple Gaussian model under DP.
By focusing on specific Gaussian noise, we can establish information-theoretic upper bounds of the classification error, which depends on the size of dimension and noise. The intuition is simple: as the dimension increases, noise can accumulate to cause classification error increase;  when the dimension is large enough, the classifier performs nearly the same as random guessing.
This provides an explanation why a larger model
causes lower classification accuracy under DP.
Our contribution is to provide a concrete analysis to formalize the intuition.
% It implies that

\item \textbf{Feature selection.} Since models have increasing classification error with an increasing number of dimensions,  we use the feature selection technique to reduce the dimension. A novel filter feature selection method is proposed, which uses a distance measure to assign a scoring to each feature. Comparing with t-statistic, the proposed method can obtain the stable and important features under DP.
\item \textbf{Experiment.} We perform simulation based on synthetic data and common real data such as RCV1, CIFAR-10.
 After using the proposed feature selection method, we show that  ResNet50 performs better than ResNet18 on CIFAR-10 in terms of DP.
\end{itemize}
% generalization
% First, according to phenomenon above, we establish a simplified condition in model about noise and dimension to show our motivation is reasonable that two aspects can combine together to influence model performance.

% Then using theorem, we find dimension reducing may help in improving performance of model. So we find a criterion to reduce parameters.

% Our theoretical results concern aspects of feature selections under DP. In order to theoretical analyze properties for data in perturbation, we study mainly about Fisher's classifier in mixture Gaussian distribution. It is simpler than image datasets and without parameters. We believe that the simplicity of our models can emphasize those advantages in selection.

% After all, normal selection methods always consider about variance which would be disturbed in DP-mechanism. We are going to show that only consider distance can also distinguish those useful features and be more robust.

% Finally, our theoretical results do indeed appear in experiment for both numerical stimulation and real datasets. Since properties and experiment matches,  we can infer that those result can be promoted to general conditions.

\subsection{Related Works}

\textbf{Differential Privacy:} In \cite{xu2019laplace}, it considers both input-DP which adds noise on data processing, and output-DP which perturbs the answer of questions, and propose practical algorithms to show how to deal with two DP mechanisms.
 For the complicated situation like neural network, DP-SGD has been proved in utility \cite{chen2020understanding} with bounds for convergence after clipping gradient. Considering dimensions, \cite{2014Differentially} points that under assumptions of loss function and parameters, empirical risk can degenerate with dimension increment under differential privacy. Recently on neural network, \cite{tramer2020differentially} shows that linear models trained on
handcrafted features significantly outperform  neural networks for
moderate privacy budgets.
%  Focusing on size of parameters for different models is not a critical issue for accuracy but converge rate.
%  They set experiment with a feature extractor adding linear model with CNN to show model with more parameters but faster converge rate would help accuracy in differential privacy.
 However, they did not consider and set experiments for the affect of the dimension for the same type of model with accuracy instead of empirical risk.
 %(e.g. depth of ResNet) which is purely effect of dimension towards accuracy of a model.

\textbf{High Dimension Low Sample Size Data:}
 In low sample size $n$ and high dimension $p$, \cite{hall2005geometric} studies the impact of the increasing $n$  with fixed $p$, and they propose a geometric representation method for high-dimension data. For a linear model, \cite{2003Optimal} propose a similar assumption with our condition and achieve a  risk bound.  For a neural network, \cite{liu2017deep} propose DNP network to train on low sample by dropouts. DNP trains model by dropping neutrons randomly to minimize model size to increase model stability. Their works are powerful but in clean data, not concerning about privacy which people concerns.

\textbf{Feature Selection:}
 There are many traditional methods like wrapper and filter \cite{hart2000pattern} to select `important' features for the clean data.  Considering utility, the robustness of selection has been considered in \cite{ilyas2019adversarial}. They propose an algorithm to separate features with robustness in a certain model by adversary perturbation: changing labels for classes. However, their work either bases on clean data or adversary perturbation, which is
not suitable for DP.
% In neural network area, pruning \cite{han2015learning}  has been a popular method to reduce network size.

\subsection{Outline of The Paper}
In the next section, we give some definitions and preliminaries. In Section 3, we analyze a simple Gaussian model and prove that larger models lead to higher error under DP. Then we proposed a feature selection algorithm for dimension reduction in differential privacy. The simulation in Section 4 reveals that feature selection can improve the performance and the proposed method performs better in some real dataset including RCV1 and CIRAR-10.

% Section 4 is experiments. First, we set a toy experiment with the same condition in theorem and show how our analysis works on the experiment. Second, an experiment is set for RCV1 for proving real data has the same result. Finally, we back to the issue in the introduction and give a simple solution.

\section{Basic Definitions}
In this section, we first define $(\varepsilon, \delta)$-DP. Moreover, we consider a simple Gaussian model under DP. Then, We will analyze a  linear discriminant analysis (LDA) classifier for this Gaussian model.

\begin{definition}\label{DP1}
(Differential Privacy \cite{dwork2008differential}) A randomized algorithm $\mathcal{M}$ with domain dataset $\mathcal{D}$ is $(\varepsilon, \delta)$-differential private if for all $\mathcal{S} \subseteq$ Range $(\mathcal{M})$ and for all $x, y \in \mathcal{D}$ that $\|x-y\|_{1} \leq 1$ :
\begin{align}
\operatorname{Pr}[\mathcal{M}(x) \in \mathcal{S}] \leq \exp (\varepsilon) \operatorname{Pr}[\mathcal{M}(y) \in \mathcal{S}]+\delta.
\end{align}
\end{definition}
Since Definition \ref{DP1} imposes no limitations on randomized algorithm $\mathcal{M}$,
we use the following Gaussian mechanism that adding Gaussian noise, which we can create a DP algorithm for function $f$ with sensitivity
$\Delta f \triangleq \max\| f(d_i) - f(d_j) \|_1$, where the maximum is over all pairs of datasets $d_i$ and $d_j$ in dataset $\mathcal{D}$ differing in at most one element and $\|\cdot\|_1$ denotes the $\ell_1$ norm.
% Since privacy can be preserved by calibrating the standard deviation of
% the noise according to the sensitivity  $\Delta f$ of the function $f$
% A common paradigm for approximating a deterministic function $f: \mathcal{D} \rightarrow \mathbb{R}^{k}$ by differential privacy mechanism is adding noise calibrated to sensitivity of $f$: $\Delta f$. It is define by the maximum L1 norm of $f$ that $\Delta f = \| f(d_i) - f(d_j) \|_1$ with $d_i$, $d_j$ the neighboring inputs. For Gaussian noise, coefficient is defined below.
\begin{definition} \label{DP}
(Gaussian Mechanism \cite{dwork2014algorithmic})
Given any function $f:$ $\mathcal{D} \rightarrow \mathbb{R}^{k}$, the $(\epsilon, \delta) $-Gaussian mechanism is defined as:
\begin{align}
\mathcal{M}_{L}(x, f(\cdot), \varepsilon)=f(x)+\left(Y_{1}, \ldots, Y_{k}\right)
\end{align}
where $Y_{i}$ are i.i.d. random variables drawn from $\mathcal{N}(0,\sigma^2)$ where $\sigma = \Delta f\cdot \ln(1/ \delta )/ \epsilon $.
\end{definition}

% For different inputs like image or text, they are hard to describe in the same form. However, for latent layer, it is almost a vector in dimension $p$. So in theorem, we use mixture of two Gaussians, or GMM (\cite{reynolds2009gaussian}) to stimulate latent layer data.

Consider the $p$-dimensional classification problem between two classes $C_1$ and $C_2$. Suppose our clean data comes from the Gaussian  model (GM) \cite{chen2020more}, \cite{schmidt2018adversarially} . To analyze the impact of DP, based on the Gaussian mechanism, we consider the following binary classification by adding Gaussian noise to achieve $(\epsilon, \delta)$-DP.

% model is GMM. Moreover, we need to add differential private noise on data. Unfortunately, as a Gaussian distribution, model itself is infinite which means $\Delta f$ is infinite in DP. However, when back to sample from a model, there should be a limitation since we can not achieve a data contain $\infty$. So in below definition, we assume all our sample $X_i$ with dimension $p$ is in limitation:
% \begin{align*}
%     \|X_i\|_1 \leq C_p
% \end{align*}
% where $C_p \rightarrow \infty$ with $p$ increase.
\begin{definition}\label{PGMM}(Private GM)
Let $\mu_k \in \mathbb{R}^{p}$, $k=1,2$, be the per-class mean vector and
\begin{align}\label{eqn_sigma}
    \mathbf{\Sigma} \triangleq {\rm diag} (\sigma_1^2, ..., \sigma_p^2)
\end{align}
be the variance parameter. $(\epsilon, \delta )-$private Gaussian mixture model is defined by the following distribution over $(\hat{x}_k,k)\in \mathbb{R}^{p}\times\{1,2\}$:  First, draw a label $k$ from $\{1,2\}$ uniformly at random, then sample the data point $x_k\in \R^p$ from $\mathcal{N}(\mu_k, \mathbf{\Sigma})$. Then we get a non-private dataset $\{x_{k}^i,k\}$, $k=1,2$, $i=1,\ldots,n_k$. Finally, according Gaussian mechanism to obtain dataset  $\{\hat{x}_k^i,k\}$, where
\begin{align}\label{hat_x}
   \hat{x}_k^i = x_k^i +
   2C_p\ln(1/ \delta)/ \epsilon \cdot (\eta_1, ..., \eta_p),
\end{align}
where $\eta_i$ are i.i.d variables $\eta_i \sim \mathcal{N}(0,1)$ and $C_p \triangleq \max_{k \in \{1,2\}, i \leq n_k} \| x_k^i \|_1$ is a constant depending on dimension $p$.

% {\red Here requires $C_p$ as maximum of l1 norm of data in dataset. But as you set training data below, I cannot give a proper definition for $C_p$.}

% We sample in following process: First, draw a label $k$ from $\{1,2\}$ uniformly at random, then sample the data point $x\in \R^p$ from $\mathcal{N}(\mu_k, \mathbf{\Sigma})$. Thus we obtain a  dataset $\{X_{ki},k\}$, where $X_{ki}$ is the data and $k$ is the corresponding label. $(\epsilon, \delta )-$private Gaussian mixture model is defined as:
% \begin{align}
%     \{\hat{X}_{ki},k \} = \{X_{ki} + 2C_p\ln(1/ \delta)/ \epsilon * \eta,k \} ,
% \end{align}
% where $\eta$ is $p$ dimension vector with random i.i.d number from standard normal distribution.
\end{definition}\label{private_GMM}
From private GM, we can obtain some training data
$\{\hat{x}_{k}^i,k\}$, $k=1,2$, $i=1,\ldots,n_k$. Let $n=n_1+n_2$. Using these training data, the parameters $\mu_k$ and $\Sigma$ can be  estimated by
 \begin{align}\label{mum}
    &\hat{\mu}_{k}=  \frac{1} { n_{k}}\sum_{i=1}^{n_{k}} \hat{x}_{k}^i , k=1,2,  \\ \label{mum1}&\hat{\mathbf{\Sigma}}=\operatorname{diag}\left\{\frac{\left(S_{1 j}^{2}+S_{2 j}^{2}\right)}{2}, j=1, \ldots, p\right\},
\end{align}
where $ S_{k j}^{2}=\frac{1}{\left(n_{k}-1\right)}\sum_{i=1}^{n_{k}}\left(\hat{x}_{k j}^i-\bar{x}_{k j}\right)^{2}  $ is the sample variance of the $j$-th feature in class $k$ and $\bar{x}_{k j}=\frac{1} { n_{k}}\sum_{i=1}^{n_{k}} \hat{x}_{kj}^i$.

When $\mu_k$ and $\Sigma$ are known, the Fisher linear discriminant rule
\begin{align}\label{FLs}
    {\delta}_n(x) = (x-{\mu})\hat{\mathbf{\Sigma}}^{-1} {\alpha},
\end{align}
is the optimal classifier \cite{hao2015sparsifying}, where $\mu=\frac{1}{2}(\mu_1+\mu_2),\alpha=\mu_1-\mu_2$. In practice, these parameters are unknown and replaced by their estimates (\ref{mum})-(\ref{mum1}). Thus, the standard LDA using an empirical version of (\ref{FLs}) is defined as follows.
\begin{definition}(LDA classifier \cite{hart2000pattern}) \label{Fisher_classifier}
% We use expectation of average for each class for $\mu_k$ and expectation of variance for $\Sigma$. Parameters estimated are:
% \begin{align}
%     \hat{\mu}_{k}= \bar{X}_{k i} = \sum_{i=1}^{n_{k}} \hat{X}_{k i} / n_{k}, k=1,2, \hat{\mu}=\left(\hat{\mu}_{1}+\hat{\mu}_{2}\right) / 2,
%     \hat{\alpha} = \hat{\mu}_{1} - \hat{\mu}_{2}
% \end{align}
% and
% \begin{align}
% \hat{\mathbf{\Sigma}}=\operatorname{diag}\left\{\left(S_{1 j}^{2}+S_{2 j}^{2}\right) / 2, j=1, \cdots, p\right\},
% \end{align}
% where $\hat{X}_{kij} $ is the $j^{th}$ feature of $ki$   data and $ S_{k j}^{2}=\sum_{i=1}^{n_{k}}\left(\hat{X}_{k i j}-\bar{X}_{k j}\right)^{2} /\left(n_{k}-1\right) $ is the variance estimation of class k.
The LDA classifier is defined as:
\begin{align}
    \hat{\delta}_n(x) = (x-\hat{\mu})\hat{\mathbf{\Sigma}}^{-1} \hat{\alpha},
\end{align}
where $\hat{\mu}=\frac{1}{2}\left(\hat{\mu}_{1}+\hat{\mu}_{2}\right), \hat{\alpha}= \hat{\mu}_{1} - \hat{\mu}_{2}$.
\end{definition}
From Definition \ref{Fisher_classifier}, it shows that if $\hat{\delta}_n (x)>0$, which classifies sample $x$ into class $C_1$. Let us denote the parameter by $\theta=(\mu_1,\mu_2,\Sigma)$, we define the following classification error.
\begin{definition} (Classification Error) If we have a new observation $x$ from class $C_1$, then
the classification error $\mathbf{W}(\hat{\delta}_n,\theta)$ of the LDA classifier is defined by
% Error rate for Fisher classifier is the rate of it puts data $x$ in class 1 to label 2. In formal, error rate $\mathbf{W}(\hat{\delta}_n,\theta)$ where $\theta$ is parameter $(\hat{\mu}_1,\hat{\mu}_2,\hat{\mSigma}.)$  is defined as:
\begin{align}
    \mathbf{W}(\hat{\delta}_n,\theta) \triangleq P(\hat{\delta}_n(x) \leq 0|\hat{x}_{k}^i)=1-\Phi(\Psi),
\end{align}
where $k=1,2,i=1,\ldots,n_k$,
\begin{align}
    \Psi=\frac{\left(\mu_{1}-\hat{\mu}\right) \prime \hat{\mathbf{\Sigma}}^{-1}\left(\hat{\mu}_{1}-\hat{\mu}_{2}\right)}{\sqrt{\left(\hat{\mu}_{1}-\hat{\mu}_{2}\right) \prime \hat{\mathbf{\Sigma}}^{-1}\left(\hat{\mu}_{1}-\hat{\mu}_{2}\right)}}.
\end{align}

\end{definition}

\section{Theoretical Results}\label{sec_2}

% \subsection{Guarantee of Privacy}

% First theorem is based on definition of DP-mechanism (definition\ref{DP}) and counting queries.

% \begin{theorem} \label{DPT}
% Adding noise $N(0,2C_0\ln(1/\delta)/\epsilon)$ to an answer with a column of p outputs with max norm of outputs are limited in $C_0$ can achieve $(\delta,\epsilon)$-differential private.
% \end{theorem}

% This promotes our mechanism in private GM is attainable for $(\epsilon,\delta)$-DP.

In this section, we first prove that with added noise, the error increases as the dimension increases. The intuition is that noise for different features can accumulate to cause large classification error. Then we focus on a criterion suitable for feature selection under DP to reduce dimension. Finally, we give an algorithm to realize our criterion for a dataset.

\subsection{Impact of High Dimension Under DP}
In this part, we first give an upper bound for the binary classification error.  Without loss of generality, the sample data are assumed to be balanced.
% we want to prove error rate about classifier drop to random guessing. Guessing approach bases on  prior knowledge about ratio of sample quantity, so we assume sample is balance without losing generality.
% In ultra high dimensionality, even though differential privacy preserve expectations for each feature. However, as adding perturbation on variance, there exist estimation error on each component. So when utilizing all features, aggregated estimation error can be very huge and would influence missing classification rate. From this intuitive idea. we give following theorem \citep{fan2008high}.
% Since we cannot get real variance of data, Fisher Classifier $\delta$ is not useful. In real condition, we would calculate it instead.

% Also, differential private perturbation may influence variance. So in all, our final model can be defined below:

% Variance of perturbation is related with dimension $p $ since differential privacy need norm of a query function and $p$ is our dimension of output form function. So when $p$ increases, noise increases in the same order. We will explain in detail in section for our mechanism.

% For all above modification of standard Fisher classifier, we are interested about its error rate.

% After formally define error rate for fisher classifier, we have the following theorem extend from \cite{fan2008high}.
\begin{theorem} \label{thm_1} Suppose the training data comes from private GM (Definition \ref{private_GMM}) and  $n_1 = n_2$. In addition, assume
 $\log p=o(n), n=o(p)$.
Then the classification error $\mathbf{W}(\hat{\delta}_n, \theta)$ is bounded by
\begin{align}\label{WW}
\mathbf{W}(\hat{\delta}_n, \theta) \leq 1-\Phi\left( \frac{ \left(1+o_{p}(1)\right)\Gamma}{2\left[\frac{4p}{n}+ \left(1+o_{p}(1)  \right)\Gamma\right]^{\frac{1}{2}}}\right),
\end{align}
where   $\delta$, $\epsilon$, $C_p$ are defined in (\ref{hat_x}), respectively; $\alpha=\mu_1-\mu_2$ and $\mu_1$ and $\mu_2$ are the per-class mean vectors;  $o_{p}(1)$ is a variable decreasing when $p$ increasing;
\begin{align}\label{gammma}
    \Gamma \triangleq \sum_{j = 1}^{p} \frac{\alpha_j^2}{\sigma_j^2 + (2C_p\ln(1/\delta)/\epsilon)^2}
\end{align}
where $\alpha_j$ is $j$-th of $\alpha$ and $\sigma_j^2$ is defined in  (\ref{eqn_sigma}).

% Also, since $\mSigma$ is diagonal, $\alpha \prime \{\mathbf{\Sigma} + (2p\ln(1/\delta)/\epsilon)^2*\mathbf{I}_p\}^{-1}\alpha$ can be written as:
% \begin{align}
%     \alpha \prime \{\mathbf{\Sigma} + (2C_p\ln(1/\delta)/\epsilon)^2*\mathbf{I}_p\}^{-1}\alpha = \sum_{j = 1}^{p} \frac{\alpha_j^2}{\sigma_j^2+(2C_p\ln(1/\delta)/\epsilon)^2}
% \end{align}
% where $\alpha = \mu_1 - \mu_2 $, $\alpha_j$ is the $j^{th}$ number of $\alpha$.
\end{theorem}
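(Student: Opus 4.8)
The plan is to bound $\Psi$ from below in probability and then use that $\Phi$ is increasing, so that $\mathbf{W}(\hat{\delta}_n,\theta)=1-\Phi(\Psi)$ yields the claimed bound \eqref{WW}. The starting observation is that the Gaussian DP mechanism of Definition~\ref{private_GMM} simply inflates the per-coordinate variances: treating $C_p$ (hence $\tau\triangleq 2C_p\ln(1/\delta)/\epsilon$) as the constant the definition declares it to be, the noisy samples satisfy $\hat{x}_k^i\sim\mathcal{N}(\mu_k,\tilde{\mathbf{\Sigma}})$ i.i.d., with $\tilde{\mathbf{\Sigma}}={\rm diag}(\sigma_j^2+\tau^2)$. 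In particular $\alpha'\tilde{\mathbf{\Sigma}}^{-1}\alpha=\Gamma$, so $\Gamma$ is exactly the Mahalanobis signal of the privatized problem, and the theorem becomes the classical high-dimensional LDA expansion carried out with this $\tilde{\mathbf{\Sigma}}$.

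Next I would make the estimators explicit. Write $\hat\mu_k=\mu_k+u_k$ with $u_k\sim\mathcal{N}(0,\tilde{\mathbf{\Sigma}}/n_k)$ independent across $k$; with $n_1=n_2=n/2$, set $v=u_1+u_2$ and $w=u_1-u_2$, which are independent $\mathcal{N}(0,\tfrac{4}{n}\tilde{\mathbf{\Sigma}})$ vectors. Because, coordinatewise, the Gaussian sample mean is independent of the Gaussian sample variance and $\tilde{\mathbf{\Sigma}}$ is diagonal, the pair $(v,w)$ is independent of $\hat{\mathbf{\Sigma}}$. A short computation gives $\mu_1-\hat\mu=\tfrac12(\alpha-v)$ and $\hat\alpha=\alpha+w$, so
\[
\Psi=\frac{\tfrac12(\alpha-v)'\hat{\mathbf{\Sigma}}^{-1}(\alpha+w)}{\bigl[(\alpha+w)'\hat{\mathbf{\Sigma}}^{-1}(\alpha+w)\bigr]^{1/2}} .
\]

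The core estimates are then: (i) replacing $\hat{\mathbf{\Sigma}}^{-1}$ by $\tilde{\mathbf{\Sigma}}^{-1}$. Since $S_{kj}^2$ equals $(\sigma_j^2+\tau^2)/(n_k-1)$ times a $\chi^2_{n_k-1}$ variate, a sub-exponential tail bound and a union bound over $j\le p$ give $\max_{j\le p}\bigl|\hat{\mathbf{\Sigma}}_{jj}/(\sigma_j^2+\tau^2)-1\bigr|=o_p(1)$ under $\log p=o(n)$; by Cauchy--Schwarz this makes $a'\hat{\mathbf{\Sigma}}^{-1}b=a'\tilde{\mathbf{\Sigma}}^{-1}b+o_p\!\bigl((a'\tilde{\mathbf{\Sigma}}^{-1}a)^{1/2}(b'\tilde{\mathbf{\Sigma}}^{-1}b)^{1/2}\bigr)$ for the relevant $a,b\in\{\alpha,v,w\}$. (ii) Expanding numerator and denominator. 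The numerator's leading term is $\tfrac12\alpha'\tilde{\mathbf{\Sigma}}^{-1}\alpha=\tfrac12\Gamma$; the cross terms $\alpha'\tilde{\mathbf{\Sigma}}^{-1}w,\ v'\tilde{\mathbf{\Sigma}}^{-1}\alpha$ are mean-zero Gaussians of variance $\tfrac{4\Gamma}{n}$, hence $O_p\!\bigl((\Gamma/n)^{1/2}\bigr)$, while $v'\tilde{\mathbf{\Sigma}}^{-1}w$, conditionally on the independent vector $w$, is mean zero with variance $\tfrac{4}{n}w'\tilde{\mathbf{\Sigma}}^{-1}w=O_p(p/n^2)$, hence $O_p(\sqrt p/n)$. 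For the denominator, $(\alpha+w)'\tilde{\mathbf{\Sigma}}^{-1}(\alpha+w)=\Gamma+2\alpha'\tilde{\mathbf{\Sigma}}^{-1}w+w'\tilde{\mathbf{\Sigma}}^{-1}w$, where $w'\tilde{\mathbf{\Sigma}}^{-1}w$ is $\tfrac4n$ times a $\chi^2_p$ variate and so concentrates at $\tfrac{4p}{n}(1+o_p(1))$ (here $p\to\infty$ since $n=o(p)$), and the cross term is lower order. Collecting terms gives numerator $=\tfrac12(1+o_p(1))\Gamma$ and denominator $=\bigl[\tfrac{4p}{n}+(1+o_p(1))\Gamma\bigr]^{1/2}$; substituting into $\Psi$ and invoking monotonicity of $\Phi$ yields \eqref{WW}.

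The step I expect to be the main obstacle is (ii): the bookkeeping showing each stochastic remainder is genuinely negligible relative to the term it is measured against inside the final ratio — this is where $n_1=n_2$, $\log p=o(n)$, $n=o(p)$ and the relative size of $\Gamma$ versus $p/n$ all have to be used together, and where the $o_p(1)$'s in the statement must be chosen consistently in numerator and denominator. A secondary technical point is the data dependence of $C_p=\max_{k,i}\|x_k^i\|_1$, which makes $\hat x_k^i$ only conditionally Gaussian; one handles this by observing $C_p$ concentrates and replacing it by a deterministic surrogate with $1+o_p(1)$ error. The remaining ingredients — Gaussian linear-functional variances, $\chi^2$ concentration, and independence of Gaussian sample mean and variance — are standard.
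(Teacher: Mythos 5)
Your proposal follows essentially the same route as the paper's own proof: absorb the Gaussian-mechanism noise into an inflated diagonal covariance so that $\alpha'\tilde{\mathbf{\Sigma}}^{-1}\alpha=\Gamma$, use a union bound with exponential (Bernstein/$\chi^2$) tails and $\log p=o(n)$ to get $\hat{\mathbf{\Sigma}}=(1+o_p(1))\tilde{\mathbf{\Sigma}}$ uniformly over coordinates, expand numerator and denominator of $\Psi$ in the mean-estimation errors (your $v,w$ are exactly the paper's $\hat{\epsilon}_{1}\pm\hat{\epsilon}_{2}$, with $w'\tilde{\mathbf{\Sigma}}^{-1}w\to 4p/n$ from the $\chi^2_p$ concentration), and finish by monotonicity of $\Phi$. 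It is correct at the same level of rigor as the paper's argument — indeed somewhat more explicit about the cross terms and the data-dependence of $C_p$, which the paper glosses over — so no further changes are needed.
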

\begin{remark} The condition $\log p=o(n), n=o(p)$ means that $n$ grows much slower than $p$ while $\log p$ grows much slower than $n$. It is one of the common assumptions to study the high dimensional learning with low sample size \cite{2003Optimal}.

% In our condition, we ask for $p$ increasing quicker than $n$ but not quicker than $e^n$. It is because we need law of large numbers, if there are too few samples, our estimations about average and variance is somehow random.
\end{remark}

% \begin{assumption}
% Only the first $m$ is functional to classifier: $\alpha_i > 0$ for $i \leq m$ and $\alpha_j = 0$ for others.
% \end{assumption}
Let $\tau \triangleq \ln(1/\delta)/\epsilon$ and $p \rightarrow \infty$, we derive an upper bound for $\Gamma$ defined in (\ref{gammma}).
\begin{align*}
    \Gamma < \frac{ \sum_{j = 1}^{p} \alpha_j^2}{ (2C_p\ln(1/\delta)/\epsilon)^2}  \leq \frac{1}{\tau^2} \frac{ \sum_{j = 1}^{p} |\alpha_j|^2}{\max_{a \leq n_1, b \leq n_2}\|x_1^a-x_2^b \|_1^2}  \\ \leq \frac{1}{\tau^2}\frac{ \sum_{j = 1}^{p} |\alpha_j|^2}{(\sum_{i = 1}^{p}|\alpha_i |)^2} \leq \frac{1}{\tau^2},
\end{align*}
where the second inequality dues to the definition of $C_p$. Since $C_p$ is the largest norm of data, the norm of distance between two classes should not be huger than $2C_p$. The third inequality caused by the maximum distance between two classes is no less than the distance of true means of each class with probability 1, i.e.,
$    P\left(\max_{a \leq n_1, b \leq n_2}\|x_1^a-x_2^b \|_1 \geq \sum_{j = 1}^{p} |\alpha_j|\right) \stackrel{n}{\longrightarrow} 1.
$
Thus, $\Gamma$ can be controlled by an upper-bound without $p$.

\begin{remark} \label{re0}

We can see two aspects from this theorem. First, for fixed noise with given $\epsilon,\delta$, when $p \rightarrow \infty$, denominator in the right side of (\ref{WW}) towards infinity. Thus the classification error is
\begin{align}\label{case1}
   \mathbf{W}(\hat{\delta}_n, \theta) \rightarrow 1 - \Phi(\mathcal{O} (\frac{1}{\sqrt{p}})) \rightarrow \frac{1}{2} ,
\end{align}
where $\mathcal{O}(d)$ means that it grows at
the order of $d$. According to  (\ref{case1}), it shows the
LDA classifier with high dimension performs nearly the same as random guessing, which is similar to the result in \cite{fan2008high}.

However, when we consider perturbation in  (\ref{WW}) with fixed $p$ and $n$. When $\epsilon$ and $\delta$ decrease to 0, i.e., , $ \tau  \rightarrow \infty$, which means the noise is large enough. Thus the classification error is
\begin{align}\label{case2}
\mathbf{W}(\hat{\delta}_n, \theta) \rightarrow 1 - \Phi(\mathcal{O} (\frac{1}{\tau^2})) \rightarrow \frac{1}{2},
\end{align}
which is merely random guessing without any ability to classify.
Moreover, when $p\rightarrow \infty$ and $\epsilon\rightarrow 0$ at the same time, then the classification error is
\begin{align}\label{case3}
\mathbf{W}(\hat{\delta}_n, \theta) \rightarrow 1 - \Phi(\mathcal{O} (\frac{1}{\sqrt{p \tau^4}})) \rightarrow \frac{1}{2}.
\end{align}
Compared (\ref{case3}) with (\ref{case1}), it reveals that the larger noise can speed up the rate of model degradation.
\end{remark}

So far we have analysis the impact of dimensionality on binary classifications. In the appendix, we have extended the results for multi-class classification.

From Theorem \ref{thm_1} and the Remark \ref{re0}, it shows that the larger model with high dimension leads to lower classification accuracy under DP theoretically. To trade off the classification accuracy and privacy-preserving, we use the feature selection technique to reduce the dimension of large model since Theorem \ref{thm_1} shows accuracy is about $\Gamma$ and thus about dimension $p$. Experiment of this theorem will be listed in experiment part.
% Thus we consider reducing dimension to control data or model performance since noise is promising of our privacy.

%\section{Dimension Reduction}

%\subsection{Non-robustness of Classical Criterion}
\begin{figure*}[hbt]
%\hrule %\vspace*{8cm}
\vspace{-0.4cm}
 \centering
\begin{subfigure}[b]{0.45\textwidth}
\includegraphics[width=\textwidth]{./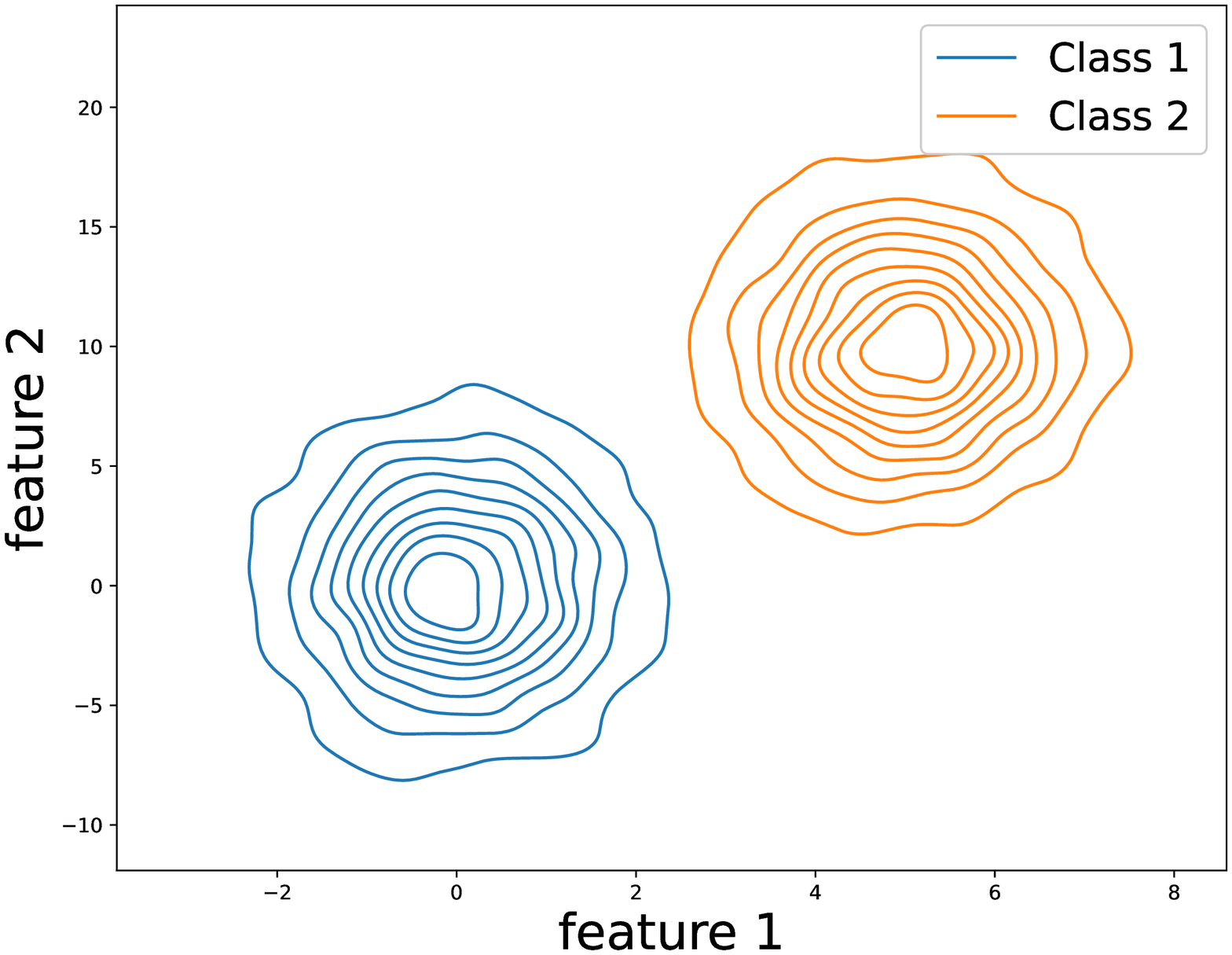}%\hrule
\caption{Feature distribution before perturbation}\label{fig_vl}
\end{subfigure}
\begin{subfigure}[b]{0.45\textwidth}
%\hrule %\vspace*{8cm}
\includegraphics[width=\textwidth]{./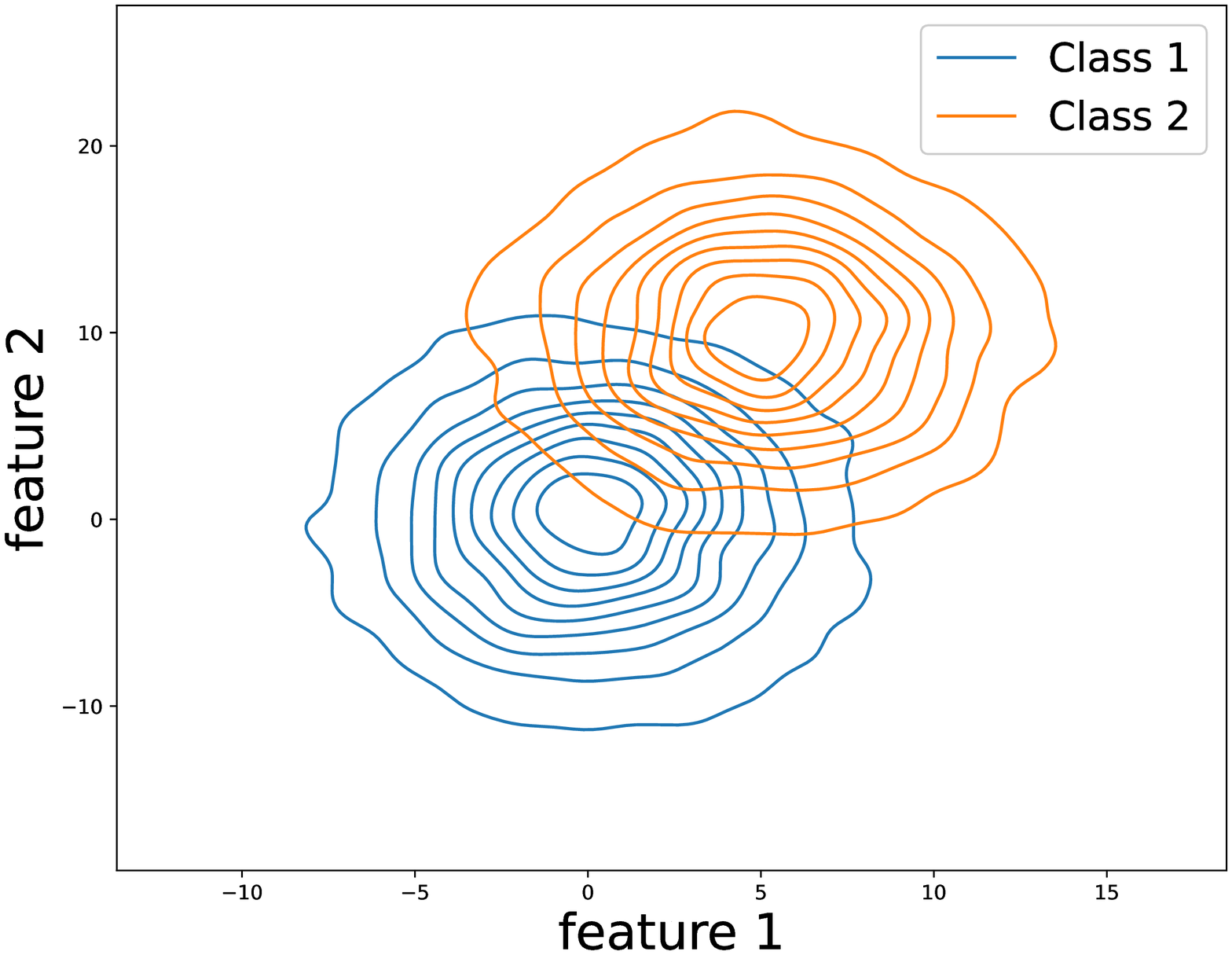}
\caption{Feature distribution after perturbation}\label{fig_vn}
\end{subfigure}
\caption{Distribution for classes in different situations. For the left figure, feature 1 of two distributions is almost no overlap which means this feature is powerful to distinguish class while feature 2 is not so powerful. For the right, both features have overlap. But feature 2 is merely over 1/2 while feature 1 is about 3/4, so feature 2 is more powerful now.}
\vspace{-0.3cm}
\end{figure*}
\subsection{Feature Selection}
In this subsection, we use filter feature selection methods, which assign a score (often a statistical measure) to each feature. One typical statistical measure is t-statistics \cite{hua2009performance}, which is defined as follows
% In order to avoid estimation error accumulate to cover information, we will clip data. For Fisher's classifier, there is a famous and simple filter approach called t-statistics. As name suggests, it uses t-statistics for features as a criterion for sorting and selecting for usage. A typical t-test statistics is
\begin{align}
    T_{j}=\frac{\bar{x}_{1 j}-\bar{x}_{2 j}}{\sqrt{S_{1 j}^{2} / n_{1}+S_{2 j}^{2} / n_{2}}},  j=1, \cdots, p ,
\end{align}
where $\bar{x}_{k j}$ and $S_{k j}$ are defined in (\ref{mum}). After computing the values of t-statistic for each feature, we sort these values in descending order and select the important feature. Moreover, under the DP setting, we hope the feature selection result is independent of the perturbation.
% $S_{k j}^{2}=\sum_{i=1}^{n_{k}}\left(X_{k i j}-\bar{X}_{k j}\right) ^2/\left(n_{k}-1\right)$ is the sample variance of the $j$-th feature in class $k$ and $\bar{X}_{k j}=\sum_{i=1}^{n_{k}} X_{k i} / n_{k} .$

When there exists a significant difference between the means of two classes, t-statistic can perform well for finding important features. However, when we add noise to the data,  the selected feature using t-statistic is susceptible to perturbation, since the formulation of t-statistic relies on sample variance. Specifically, according to the definition of $S_{k,j}$ defined in (\ref{mum}), we calculate the expectation of $\hat{\mSigma}$ as
$\mathbf{E}( \hat{\mSigma}) = \mSigma + (2C_p\ln(1/\delta) /\epsilon)^2 *\mathbf{I}_p.$
It shows that the DP budget $\epsilon$ and $\delta$ can influence the value of t-statistic. Here we also give an example to show it.

% Comparing with t-statistics, our method will not be disturbed by noise level but t-statistics have to since in expectation of calculation:
% \begin{align*}
%     \mathbf{E}(\hat{\mu}_k) &= \mu_k ,  \\
%     \mathbf{E}(\hat{\mSigma}) &= \mSigma + (2C_p\ln(1/\delta) /\epsilon)^2 *\mathbf{I}_p .
% \end{align*}
% Expectation of central point of a class remains while variance enlarged with variance of noise.

% A t-test is a type of inferential statistic used to determine if there is a significant difference between the means of two groups, which may be related in certain features.

% This is an powerful approach to distinguish useful features. As it is the definition of student t-test, or in this case: Welch t-test (\cite{box1987guinness}). it is designed for testing difference of means. This statistics perform well for finding importance in features (\cite{hua2009performance}) since importance can be regard as difference in means. However, it relies on sample variance, so feature selected by it would susceptible to perturbation on data.

\begin{example} Consider a binary classification problem based on private GM (Definition \ref{PGMM}). The variance and mean vector set
$\mathbf{\Sigma} ={\rm diag}(1,10)$ and $\mu_1 = [0,0]$
$\mu_2 = [5,10]$ (Fig.\ref{fig_vl}), respectively.
We sample $n_1 = n_2 = 200$ for each class.

Firstly, if  we use the clean data without adding  noise in  private GM, the value of t-statistic, $D = \mu_2 - \mu_1$,  $S_1^2$ and $ S_2^2 $ are calculated as the following table.
% then the enumerator and denominator of t-statistic is $D = \mu_2 - \mu_1 = [5,10]$ and $S_1^2 = 1, S_2^2 = 10$, it implies the following result.
% Assume our mixture Gaussian model has two classes with same variance $\mathbf{\Sigma} = [[1,0],[0,10]]$ for simplicity. Class one have zero means $\mu_1 = [0,0]$ and class two is $\mu_2 = [5,10]$ (Fig.\ref{fig_vl}).  Then for t-statistics, we use true variance and expectation instead for convenience. class one is zero means, it means that enumerator of each feature is $D = \mu_2 - \mu_1 = [5,10]$. For denominator, our $\mathbf{\Sigma}$ shows that variance of feature 1 $S_1^2 = 1$ and feature 2 $S_2^2 = 10$, taking all these information back to formula, we have:
$$
\begin{tabular}{c|c|c}
feature & 1 &2\\
\hline
   D & 5& 10 \\
   variance& 1&10 \\
    t-statistics & 50&31.62
\end{tabular}. $$
The above table shows that feature 1 has a bigger t-statistics, thus we select feature 1 if we only require one feature.
%so it means similarity for feature 1 is more tiny than 2, and we need to choose 1 as our selection criterion.

Secondly, when we add noise with  DP budget of $\epsilon = 3$ in private GM (Fig.\ref{fig_vn}), the results are presented as follows
% $\mathbf{\Sigma}$, things change. For instance, we add noise $\epsilon = 3$ to each feature.
% This time $S_k^2$ would be infected by adding $ 9$.
$$
\begin{tabular}{c|c|c}
feature & 1 &2\\
\hline
     D & 5& 10 \\
   variance perturbed& 10&19 \\
    t-statistics & 15.82 & 22.94
\end{tabular}. $$
Thus feature 2 is a better result to be selected.

% After adding noise to data, the best feature to classify changes from 1 to 2. This shows best feature or sort of t-testing is not stable to perturbation which means small noise on data may create a new rank and it is harmful for feature selection in our method.
\end{example}
This example shows the best feature or the sort of t-statistic is not stable to perturbation, which means a small noise on data may create a new rank and it is harmful for feature selection in DP. However, numerator of t-statistics is stable since $ E(\hat{\mu}_k) = \mu_k$ regardless of perturbation (see the first row of the above tables). It suggests us to consider the following distance criterion for selecting the important feature.

% clearly that t-test selection is susceptible to variance noise which is differential privacy would disturb.

% Moreover, Since t-statistics is not robust to DP, we have to change our publication based on noise level which violate our initial idea to create a non-interactive dataset.

% In all, now we face two aspects of variance. First, small variance is beneficial for classification since it means clean data are located closer in the same class so easy to distinguish. Second, large variance is stable in perturbation for data. These two aspects are somehow a contradiction. Thus we consider a new criterion without variance.

% Therefore, a method more stable in variance perturbation but still effective in classification is required.

% \subsection{Distance Criterion}

% In this section, we propose a simple but more robust criterion and provide a theorem about its performance to show that this criterion is able to distinguish those useful features.

\begin{definition}
Distance criterion is defined as follow:
\begin{align}
    \hat{D}_j = \hat{x}_{1j}-\hat{x}_{2j},
\end{align}
where $\hat{x}_{kj}$ is the average of class $k$, feature $j$.
\end{definition}
This is a stable criterion since $\mathbf{E}(\hat{D}) = \mu_1 -\mu_2$ whether the data has noise or not.
Next, we give a theorem to show that the proposed distance criterion can distinguish those useful features with probability one.
% Like t-statistics, we need to estimate variables. So in real, our criterion is written in form $ \hat{D} = \hat{\mu}_1 - \hat{\mu}_2 $.

%  it can distinguish those useful features. For the next theorem, we need following conditions:
\begin{assumption}\label{assum}
~
\begin{enumerate}
\item Assume that distance vector $\alpha=\mu_{1}-\mu_{2}$ is sparse and without loss of generality, only the first $s$ entries are nonzero.
\item  Assume that the elements of both diagonal matrices $\mathbf{\Sigma}_{1}$ and $\boldsymbol{\Sigma}_{2}$ are bounded with upper bound $v$.
\end{enumerate}
\end{assumption}
% Assumptions are natural. Since we need to distinguish the useful features, we put them before others in convenience of writing. Also, we can get only finite data, so infinite variance is not possible to calculate but a large variance.
In high dimension learning with low size data, sparsity is always a consideration \cite{grvcar2005data}. Also, variance is normal to be seen as finite, otherwise, estimation of variance will not be close to true value with a low size of data.

The following theorem describes that all important features can be selected by distance criterion.
% describes the situation under which the distance can pick up all important features by choosing an appropriate critical value.
Recall that $n=n_{1}+n_{2}$ and $n_k$ represents sample size of class $k$.
\begin{theorem} \label{thm_last}
Let $c_1 \leq n_1/n_2 \leq c_2$, $s$ be a value such that $\log (p-s)=o\left(n^{\gamma}\right)$ and $\log s = o\left(n^{\frac{1}{2}-\gamma} \beta_{n}\right)$ for some $\beta_{\mathrm{n}} \rightarrow \infty$ and
${0<\gamma<\frac{1}{3} }.$ Suppose that  $\min_{j=1,\ldots,p} \left|\alpha_{j}\right| =v n^{-\gamma} \beta_{n}$. Then under Assumption \ref{assum}, for $y =  cvn^{(\gamma-1) / 2}$ with  $c$ some positive constant, we have
\begin{align}
P\left(\min _{j \leq s}\left|\hat{D}_{j}\right| \geq y \quad \text { and } \quad \max _{j>s}\left|\hat{D}_{j}\right|<y\right) \rightarrow 1.
\end{align}
\end{theorem}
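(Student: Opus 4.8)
The plan is to reduce the statement to two routine Gaussian-tail-plus-union-bound estimates, exploiting the fact that the distance criterion $\hat{D}_j$ uses only the class sample means and never the (perturbation-sensitive) sample variances, so it stays exactly normal. First I would observe that, since both the GM data and the additive Gaussian DP noise in (\ref{hat_x}) are normal and independent, $\hat{D}_j = \hat{x}_{1j}-\hat{x}_{2j}$ is exactly $\mathcal{N}(\alpha_j, v_j^2)$ with $\alpha_j = \mu_{1j}-\mu_{2j}$ and $v_j^2 = \Var(\hat{x}_{1j})+\Var(\hat{x}_{2j})$. Reading Assumption~\ref{assum}(2) as a bound on the per-coordinate variances of the perturbed classes, $v_j^2 \le v/n_1 + v/n_2$, and since $c_1 \le n_1/n_2 \le c_2$ forces $\min(n_1,n_2) \ge n/C_1$ for a constant $C_1 = C_1(c_1,c_2)$, we get $v_j^2 \le 2C_1 v/n$ uniformly in $j$. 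Writing $A$ for the event in the theorem, a union bound gives $P(A^c) \le \sum_{j\le s} P(|\hat{D}_j| < y) + \sum_{j>s} P(|\hat{D}_j| \ge y)$, so it suffices to show each sum tends to $0$.

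For the null coordinates $j > s$ we have $\alpha_j = 0$, so $\hat{D}_j/v_j$ is standard normal and the Gaussian tail bound gives
\begin{align*}
P\big(|\hat{D}_j| \ge y\big) \;\le\; 2\exp\!\Big(-\tfrac{y^2}{2v_j^2}\Big) \;\le\; 2\exp\!\big(-b_1 n^{\gamma}\big),
\end{align*}
on plugging in $y = cvn^{(\gamma-1)/2}$ and $v_j^2 \le 2C_1 v/n$, with $b_1 = c^2 v/(4C_1)>0$. Summing over the at most $p-s$ such indices gives $2\exp(\log(p-s) - b_1 n^\gamma) \to 0$ precisely because $\log(p-s) = o(n^\gamma)$; the constant $c$ is immaterial here, only the exponent $n^\gamma$ matters. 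For the signal coordinates $j \le s$ I would use the reverse triangle inequality $|\hat{D}_j| \ge |\alpha_j| - |\hat{D}_j - \alpha_j| \ge vn^{-\gamma}\beta_n - |\hat{D}_j - \alpha_j|$, using the signal-strength hypothesis $|\alpha_j| \ge vn^{-\gamma}\beta_n$. Because $y/(vn^{-\gamma}\beta_n) = c\,n^{(3\gamma-1)/2}/\beta_n \to 0$ when $\gamma < 1/3$ and $\beta_n \to \infty$, for all large $n$ the gap satisfies $vn^{-\gamma}\beta_n - y \ge \tfrac{1}{2} vn^{-\gamma}\beta_n$, hence $\{|\hat{D}_j| < y\} \subseteq \{|\hat{D}_j - \alpha_j| > \tfrac{1}{2}vn^{-\gamma}\beta_n\}$, and since $\hat{D}_j - \alpha_j \sim \mathcal{N}(0, v_j^2)$,
\begin{align*}
P\big(|\hat{D}_j| < y\big) \;\le\; 2\exp\!\Big(-\tfrac{v^2 n^{-2\gamma}\beta_n^2}{8v_j^2}\Big) \;\le\; 2\exp\!\big(-b_2\, n^{1-2\gamma}\beta_n^2\big),
\end{align*}
with $b_2 = v/(16C_1)>0$. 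Thus $\sum_{j\le s} P(|\hat{D}_j|<y) \le 2\exp(\log s - b_2 n^{1-2\gamma}\beta_n^2) \to 0$, because $\log s = o(n^{1/2-\gamma}\beta_n)$ and $n^{1/2-\gamma}\beta_n = o(n^{1-2\gamma}\beta_n^2)$ (again using $\gamma<1/3$, $\beta_n\to\infty$), so $\log s = o(n^{1-2\gamma}\beta_n^2)$. Combining the two cases yields $P(A^c)\to 0$, i.e.\ $P(A)\to 1$.

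The arithmetic above is routine; the two places I expect to need care are (i) justifying that the effective per-coordinate variance of $\hat{D}_j$ stays $O(1/n)$ despite the DP perturbation scale $2C_p\ln(1/\delta)/\epsilon$ in (\ref{hat_x}) — since $C_p$ is random and grows with $p$, one must either read Assumption~\ref{assum}(2) as a bound on the post-perturbation per-class variances, or first condition on a high-probability event on which $C_p$, and hence the noise variance, is controlled; and (ii) checking that $y = cvn^{(\gamma-1)/2}$ lies in the right window — large enough that $y^2/v_j^2 \gtrsim n^\gamma$ beats the multiplicity $p-s$ of null coordinates, yet small enough (thanks to $\gamma<1/3$ and $\beta_n\to\infty$) to stay well below the signal floor $vn^{-\gamma}\beta_n$. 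This interval is exactly what the rate conditions in the hypothesis are engineered to keep nonempty.
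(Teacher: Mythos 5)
Your proposal is correct, and it reaches the result by a more elementary tail argument than the paper, although the overall skeleton is the same. The paper's appendix proof (of this theorem, labelled Theorem 8 there) uses exactly your decomposition — union bound over the null coordinates $j>s$, then the triangle-inequality reduction $\min_{j\le s}|\hat{D}_j|\ge\min_{j\le s}|\alpha_j|-\max_{j\le s}|\hat{D}_j-\alpha_j|$ for the signal coordinates, with the same calibration of $y=cvn^{(\gamma-1)/2}$ against the signal floor $vn^{-\gamma}\beta_n$ via $\gamma<1/3$ and $\beta_n\to\infty$ — but it obtains the tail bounds indirectly: it passes from $\hat{D}_j$ to the t-statistic via $|T_j|\ge\frac{\sqrt{n}}{2}\frac{|\hat{D}_j|}{v}$ and then invokes the moderate-deviation result for t-statistics quoted as Lemma 2 (from Cao's work), together with the Mills-ratio bound $1-\Phi(x)\le\frac{1}{\sqrt{2\pi}x}e^{-x^2/2}$. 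You instead exploit that $\hat{D}_j$ is (conditionally) exactly normal with variance $O(v/n)$ and apply plain Gaussian concentration, which removes the dependence on Lemma 2 entirely and makes the role of the two rate conditions $\log(p-s)=o(n^{\gamma})$ and $\log s=o(n^{1/2-\gamma}\beta_n)$ completely transparent; the price is that your argument is tied to exact Gaussianity, whereas the t-statistic/moderate-deviation route would survive for non-Gaussian features (which is presumably why the paper, following the Fan--Fan style of analysis, keeps it, even though in its Gaussian setting the lemma is invoked with the throwaway remark that it "is always tenable"). One shared caveat: both arguments treat the per-coordinate variance of the perturbed data as bounded by the constant $v$ of Assumption \ref{assum}, even though the DP noise scale in the private GM involves the data-dependent quantity $C_p$ that grows with $p$; the paper silently folds the noise into $\Sigma$ (as in its proof of Theorem \ref{thm_1}), while you at least flag the issue and indicate the two standard fixes (read the assumption as a post-perturbation bound, or condition on an event controlling $C_p$), so your treatment is no less rigorous than the paper's own.
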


\begin{remark}
From Theorem \ref{thm_last}, we observe that the proposed distance criterion can distinguish the non-zero feature with probability one. When these important features are selected, then the dimension can be reduced. Thus, combing with Theorem \ref{thm_1},  classification accuracy and privacy-preserving can be traded off by feature selection.

% Theorem above shows that our criterion can distinguish those features that distances are not zero ideally with probability one. Those features are consider to be helpful in model. Also, $y$ is in minus power, so when dimensionality increases, our approach can still choose those non-zero distance instead of only find large distance. Therefore,
% we can use this method to choose more than we need but maintain its performance which is beneficial to publish.
\end{remark}

\subsection{DP Feature Selection Algorithm (DFS)}
% In ultra high dimension data protection, it is unnecessary to publish all the data since more data do not means more precise (\cite{junque2013predictive}).

% However, even
% {\red In this subsection, we consider to the way to publish the privacy data. In fact, releasing any part of sensitive data with noise has been proved impossible (\cite{dinur2003revealing}). So we consider to fuse interactive method(\cite{dwork2014algorithmic}) and representation learning(\cite{bengio2013representation}) with non-interactive model.}

% We use neural network or other extractor $T$ to extract information in latent layer. Before publishing, we use our method to reduce dimension which is our core issue. Then we add noise using classic way regarding whole process as a querying from dataset to clipped features.

% Assume the neural network or other extractor $T$ has been  extracted the important information in latent layer. Before publishing, we use the proposed distances criterion to reduce the dimension. Finally, {\red we add noise using classic way regarding whole process as a querying from dataset to clipped features. not clear}
% Our algorithm for data-processing can be written as follows.

Based on the proposed distance criterion, we design an integral algorithm to select the important feature under DP.

\begin{algorithm}[htb]
\caption{DP Feature Selection Algorithm}\label{alg_1}
\textbf{Input:}: [[$\mathbf{X}_{11}$],...,[$\mathbf{X}_{1n_1}$]] and [[$\mathbf{X}_{21}$],...,[$\mathbf{X}_{2n_2}$]]\\
Calculate average of features: $\hat{\mu}_1 = [a_{11},...,a_{1p}]$ and $\hat{\mu}_2 =  [a_{21},...,a_{2p}]$  \\
Calculate distance of features: $D$ = |$\hat{\mu}_1 - \hat{\mu}_2 $| \\
Rank features with distance: $X_{r}$ = [[$x_{1[1]}$,...,$x_{1[p]}$],...,[$x_{n[1]}$,...,$x_{n[p]}$]] \\
Cut the first $m$ features: $X_{c}$ = [[$x_{1[1]}$,...,$x_{1[m]}$],...,[$x_{n[1]}$,...,$x_{n[m]}$]]\\
 Calculate the maximum norm in $X_c$:  $N_{max}\triangleq \max_{i \leq n,X_i \in X_c} \|X_i \|_1$ \\
Generate noise: $n \times m$ matrix $\varepsilon$ with i.i.d. $\varepsilon_{ij} \sim \mathcal{N}(0,2N_{max}\ln(1/\delta)/\epsilon)$\\
Add noise to feature: $ \hat{X} = X_{c} + \varepsilon$\\
\textbf{Output:} feature with noise $\hat{X}$, $Label$
\end{algorithm}

Since we clip feature from $p$ to $m$ ($m<p$), $C_p$ and $p$ in  Theorem \ref{thm_1} will become smaller, thus classification error would be reduced.

\begin{remark}
This algorithm bases on our private GM. When we consider a neural network with inputs of image and text which are not vectors, we will use their latent layer of a neural network as features to utilize our algorithm.
\end{remark}
% In algorithm\ref{alg_1}, feature data means all features with all samples. $\mathbf{X}_i$ refers to features or data of a sample.

% Since our method of selection is robust without variance, we can then establish a total non-interactive DP dataset because our features could add any level of noise with sorting unchanged which means feature to publish is fixed once we rank features and decide publishing scale instead of tuning sorting and publish elements every-time after noise level changes.

\section{Experiment}\label{sec_6}

In this section, we check our theoretical results by performing experiments on multiple common datasets, including synthetic data, RCV1 \cite{lewis2004rcv1} and CIFAR-10.
For all DP-mechanism, choose normal distribution and set $\delta = 0.0001$. Then for different data set, we choose different DP  budget of $\epsilon$   to protect the data.
\subsection{Synthetic Data}

For synthetic data, consider two high dimensional  Gaussian distributions $\mathcal{N} (\vmu_0 , \mSigma_0)$ and $\mathcal{N} (\vmu_1, \mSigma_1)$, where  $\mSigma_k = diag(a_{1_k},...,a_{p_k})$ with $p=3000$ and $a_{i_j} \sim \exp(0.1)$. In addition, $\vmu_0 = \textbf{0}\in \mathbb{R}^{p}$, $\vmu_1$ is a  $
(1-c) \delta_{0}+\frac{1}{2} c \exp (-2|x|),
$
where $\delta_0$ means equals to 0 and $c = 0.88$.
\begin{figure}[t]
\vspace{-0.42cm}
%\hrule %\vspace*{8cm}
\vbox to 5.5cm{\vfill \hbox to \hsize{\hfill
\scalebox{0.265}[0.28]{\includegraphics{./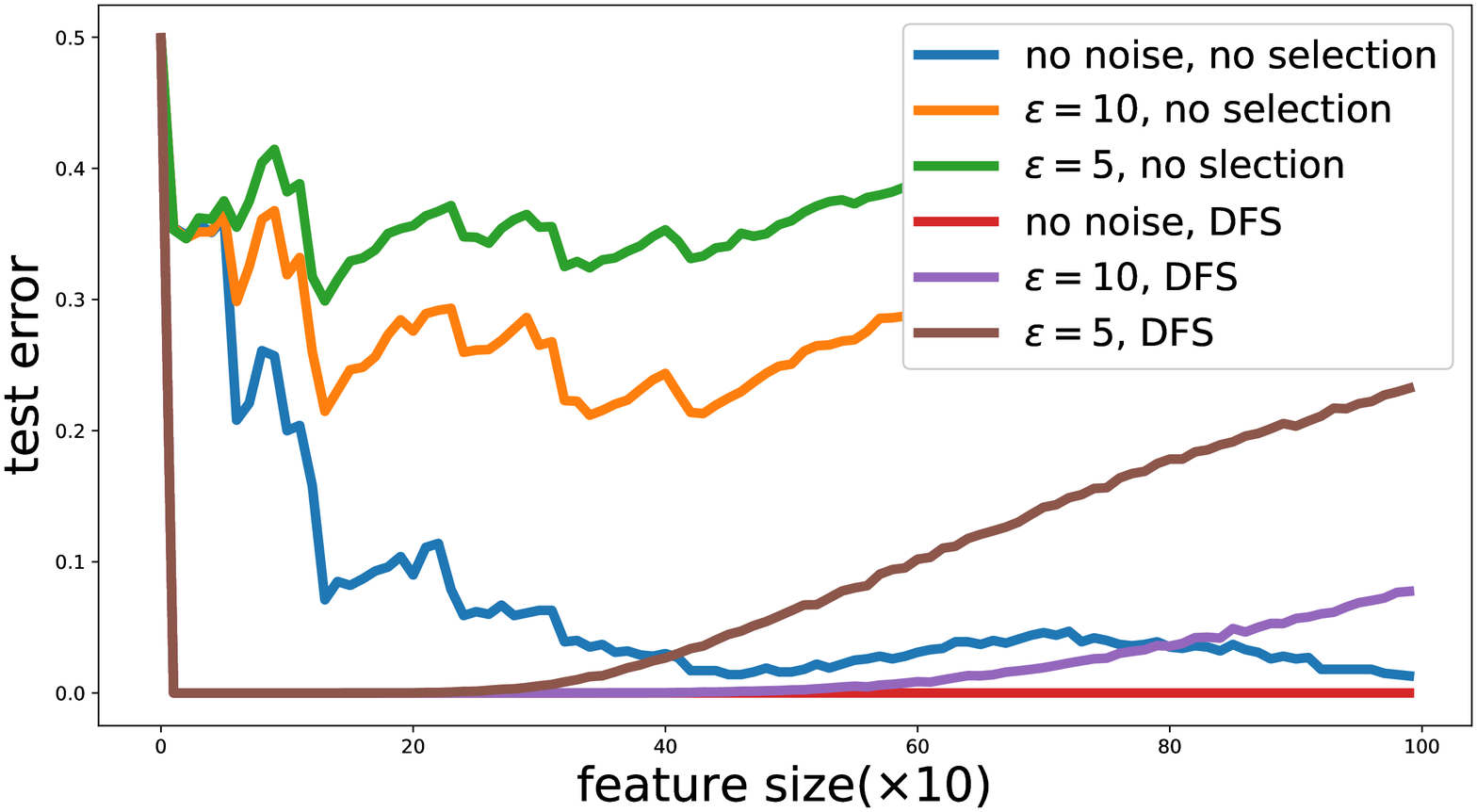}} \hfill}\vfill}
\vspace{-0.42cm}
%\hrule
\caption{Comparison for different DP parameters and selection. For our algorithm, only 20 features can generate a model with the highest accuracy in all DP settings while no selection model cannot reach the best with noise.}\label{fig_max}
\end{figure}
\begin{figure*}[t]
%\hrule %\vspace*{8cm}
 \centering
\begin{subfigure}[b]{0.45\textwidth}
\includegraphics[width=\textwidth]{./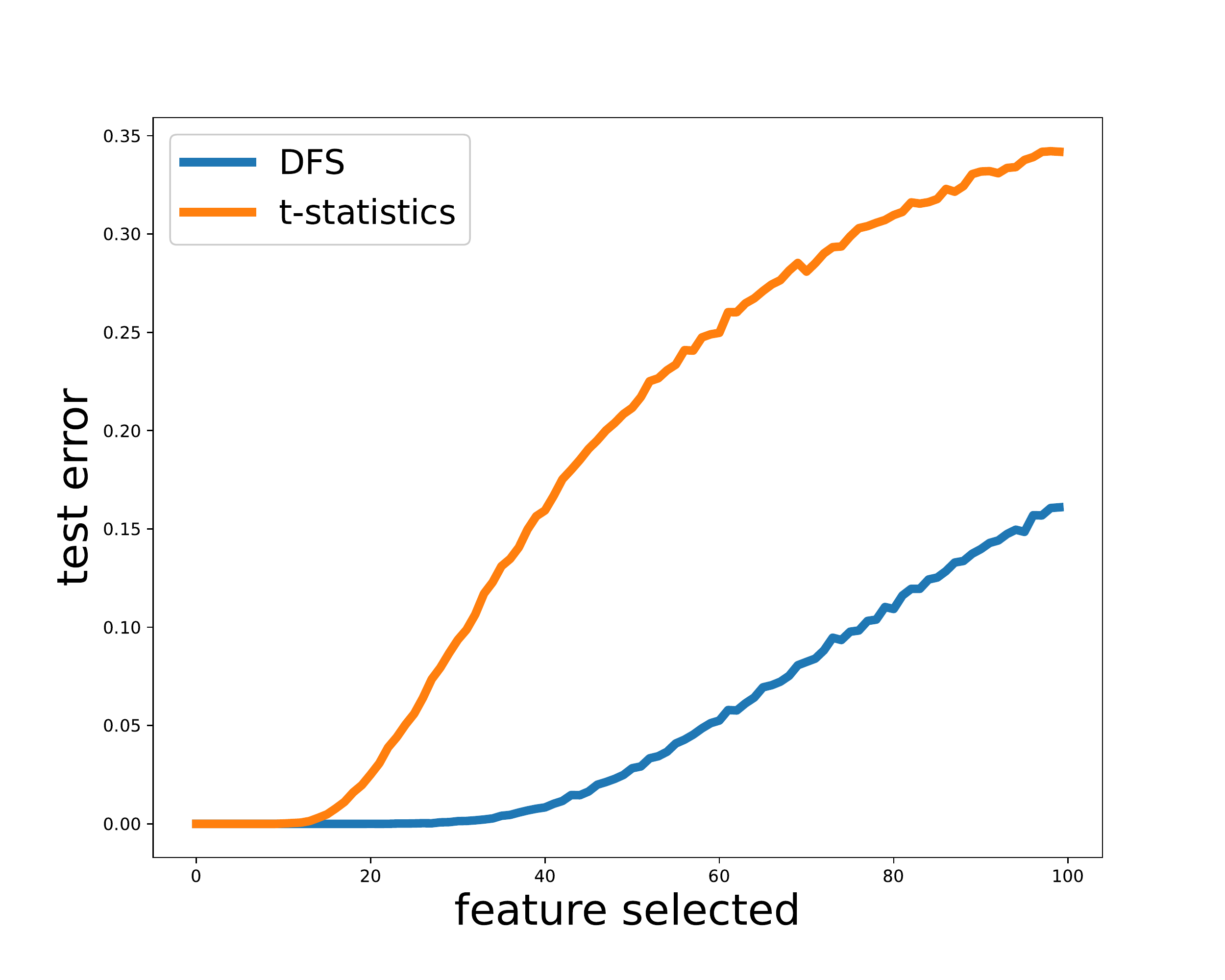}%\hrule
\caption{fixed $\varepsilon = 4$ with increasing $p$}\label{fig_num}
\end{subfigure}
\begin{subfigure}[b]{0.45\textwidth}
%\hrule %\vspace*{8cm}
\includegraphics[width=\textwidth]{./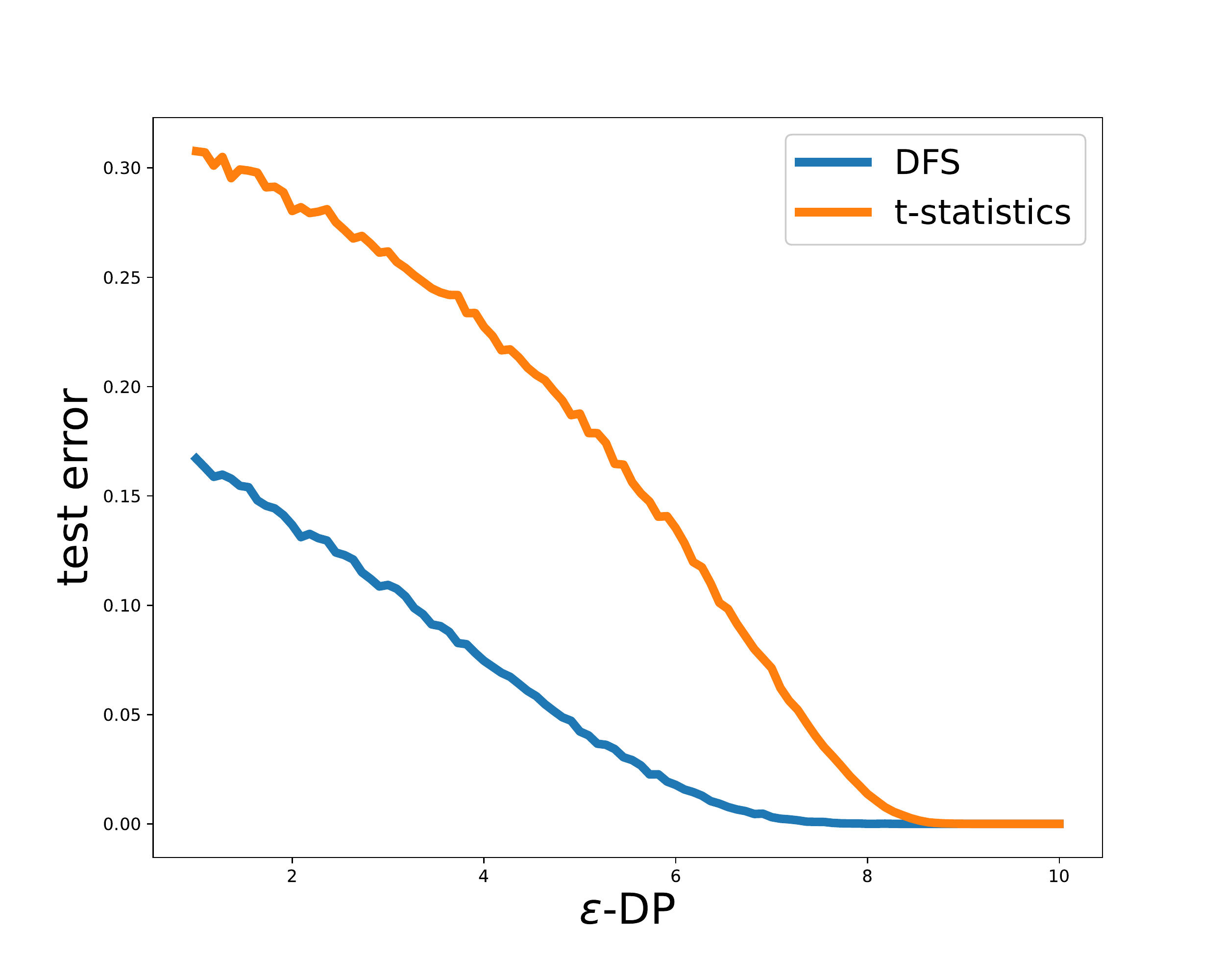}
\caption{fixed $ p = 70 $ with increasing $\varepsilon$}\label{fig_cov}
\end{subfigure}
\caption{Results for the numerical dataset. The left figure shows that our DFS maintains stable for $p < 30$ while t-statistic climbing all the time. Right shows that for fixed $p$, comparing with t-statistic, DFS obtain higher accuracy with the same DP budget $\varepsilon$. }
\end{figure*}
\begin{figure*}[t]
%\hrule
 \centering
\begin{subfigure}[b]{0.45\textwidth}
\includegraphics[width=\textwidth]{./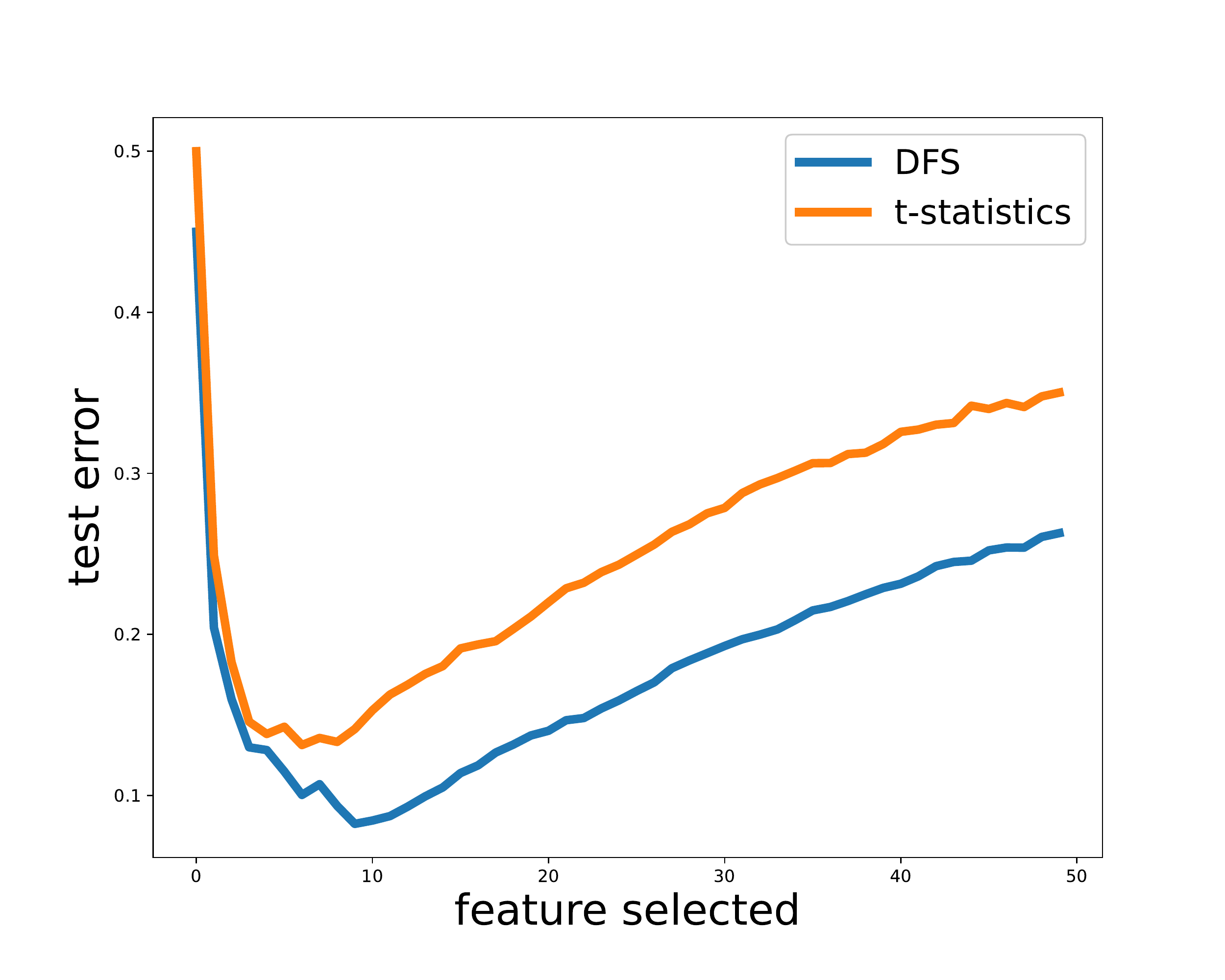}%\hrule
\caption{fixed $\varepsilon = 6$ with increasing $p$}\label{fig_rcv}
\end{subfigure}
\begin{subfigure}[b]{0.45\textwidth}
%\hrule %\vspace*{8cm}
\includegraphics[width=\textwidth]{./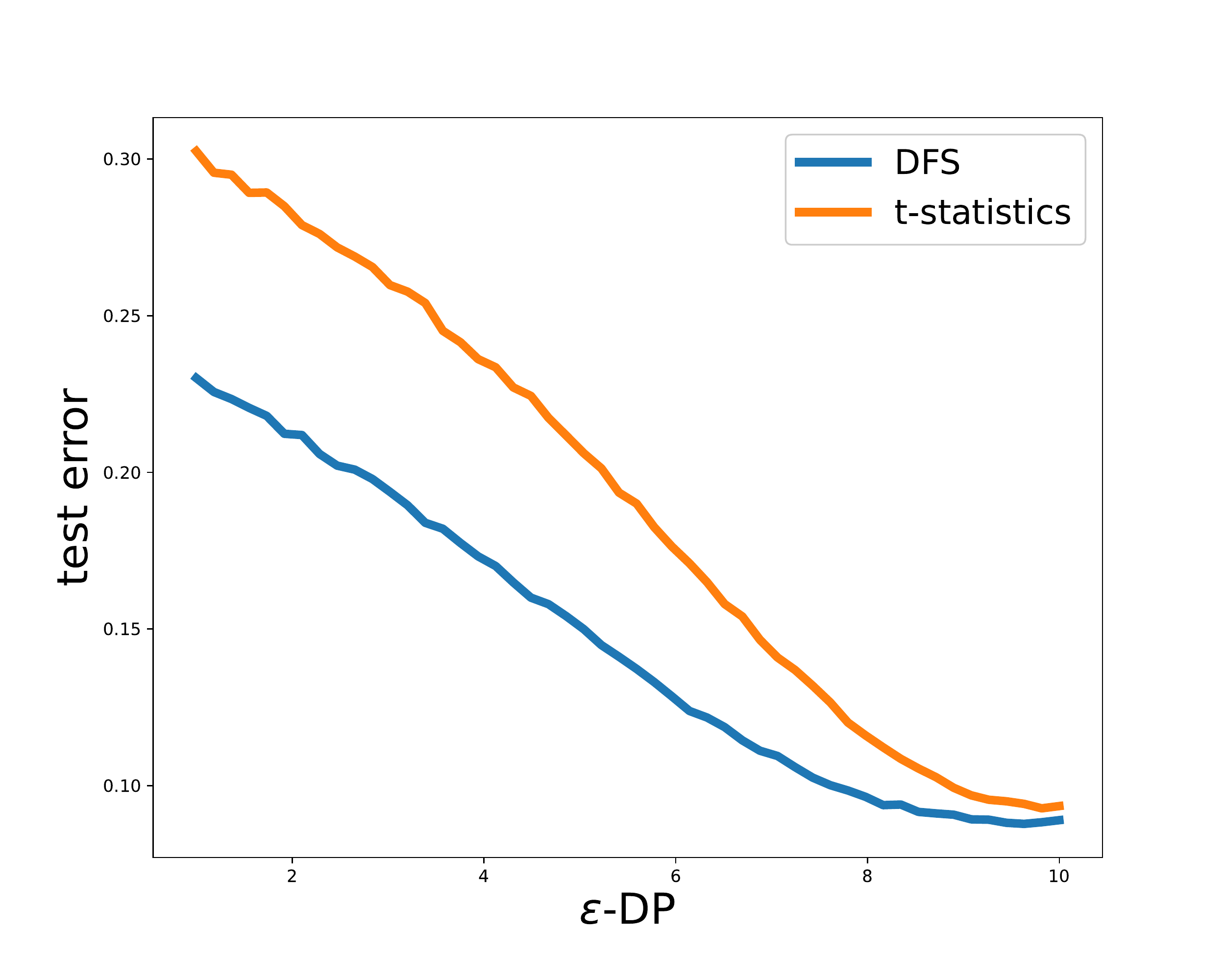}
\caption{fixed $p = 20$ with increasing $\varepsilon$}\label{fig_rcv_cov}
\end{subfigure}
\caption{Results for RCV1. The left figure shows when the dimension of feature comes to 9, both algorithm reaches the best accuracy while DFS gets 0.06 test error. Right shows for DP budget $\varepsilon $  from 1 to 6, test error of DFS is much smaller than that of t-statistic.}
\end{figure*}

Using the GM parameters above, we generate 30 training data and 200 testing data for each class. To protect privacy, we add some Gaussian noise with $\epsilon = 5$ and $\epsilon = 10$ to all data except labels basing on private GM (Definition \ref{PGMM}). The LDA classifier is used to separate these two classes. Fig. \ref{fig_max} presents the test accuracy.
% In experiment,
% First, we set an experiment for its utility.
% For RCV1 dataset, it is well-known as an embedding set, we preprocess it with nothing.  As a real embedding dataset, the majority of data is zero, so for calculation of variance, we add $\epsilon = 1e-6$ to counter with dividing 0. Also, our method works on binary classification, so we randomly extract two kinds of contexts and divide their intersection. For dataset below, we also use this method to process data.

% For image data,  use ResNet50 for CIFAR-10 and CNN for MNIST to classify and reach accuracy 91\% and 99\%, then extract 2048 and 1024 features on the last but one layer.

% We first perform on synthetic data and RCV1 that distance is actually a useful criterion in feature selection. This shows that our theorem \ref{thm_last} is correct.

% Moreover, we do experiment on all datasets for our publish methods in t-statistics and distance. Because different data structure on data set, we would analyze them one by one.

% Notice: for convenience of presentation for natural features without sorting, we enlarge $\epsilon$ to make it reach the best accuracy and not increase too much in DP condition.

In Fig.\ref{fig_max}, we observe that the proposed algorithm performs well in feature selection under different noise levels. In noiseless condition, the performance of feature selection and without feature selection is the same. However, within 20 features selected by DFS, our algorithm converges to 0 test error while feature without selection needs more than 1000 features. Moreover, DFS can select features to maintain the best accuracy for more than 500 features. Also, without feature selection, result is not smooth since some perturbing data influence the performance.

% We show that our algorithm can release useful features within only 10 features. This helps publisher release less feature, so less information would leak out.

% Then we focus on our robustness in
% feature selection comparing with t-statistics we define above.

% As our algorithm is useful comparing with the non-selection rule. We then focus on the robustness of algorithm comparing with t-statistic.

In Fig.\ref{fig_num}, a comparison of proposed algorithms with t-statistic in terms of test error has been presented. It shows that the increase of dimension leads to the decrease of performance, which is consistent with Theorem \ref{thm_1}.
 However, the curve of the proposed method is below that of t-statistic, thus the proposed method can reduce the influence of high dimension. Since the larger $\varepsilon$ means the smaller noise, Fig. \ref{fig_cov} shows the test error decreasing with the increasing the $\varepsilon$ while the proposed method can reduce the test error much more.
In addition, in both figures, our curves maintain parallel for a long time, which means the proposed method is more resilient to dimension increasing and noise accumulation.

\subsection{RCV1}

RCV1 dataset  contains over 800,000 manually categorized news-wire stories labeled with news category, embedding into 47236 features. So we can regard it as features after extractor. Then we set it into a binary classification problem by choosing random 2 classes and draw 40 data each for training and 200 for the test.

For Fig.\ref{fig_rcv} and Fig.\ref{fig_rcv_cov}, it also shows the test error of DFS is smaller than that of t-statistic.  It is convinced that our algorithm can outperform traditional selection methods by t-statistic in the DP condition. Thus the proposed method provides a solution for the issue in the introduction.

\subsection{CIRAR-10}

Recall our original problem in the introduction that ResNet50 draws back more due to noise accumulation, our selection rule helps to reduce this tendency. (Algorithm for multiple classes and details in this experiment is listed in the appendix.)

In this experiment, we use the last but one layer data of ResNet training by DP-SGD setting $\epsilon = 5$ to represent the input data in our algorithm. For a fair comparison, since  ResNet18 has 512 features, we select 512 features from 2048 in ResNet50. Then we use multi-layer perceptrons (MLP) to train it with SGD without noise.

% First, we compare selected feature with non-selected data in MLP model (Fig.\ref{cif_all}) with same noise level. Then we compare our reduction method result with feature layer of ResNet18 (both for 512 features) from DP-SGD(Table\ref{accu}) and also using MLP as model.

% \begin{figure}[htb]
% %\hrule %\vspace*{8cm}
% \vbox to 5.5cm{\vfill \hbox to \hsize{\hfill
% \scalebox{0.3}[0.305]{\includegraphics{./Figs/cifar_all.eps}} \hfill}\vfill}
% %\hrule
% \caption{Comparison for our algorithm with initial feature without sorting. Our method decrease more gently. Also, we reach the best accuracy higher than no feature without sorting for 4\%.}
% \label{cif_all}
% \end{figure}

% In figure\ref{cif_all}, since our algorithm selects useful features, accuracy climb more quickly than non-selected features. also, with feature selected obtain ability for classify, DFS reduces accuracy dropping.

In Table \ref{accu}, beyond that our algorithm can raise accuracy for ResNet50, we also show that our method is better than the classic approaches which consider variance like t-statistics.

\begin{table}[h]
\centering
\begin{tabular}{@{} l rrr@{}}
		& \multicolumn{3}{c}{CIFAR-10}\\[-0.25em]
		\cmidrule(l{5pt}r{0pt}){2-4}
		Model & Min & Max & Median\\
		\toprule
		ResNet50 & 75.5 & 79.2 &  77.0 \\
		ResNet18 & 83.6 & 85.3 & 84.5 \\
		ResNet50+t-statistic	& 78.4 & 81.3 & 80.1\\
		ResNet50+DFS & \bf{84.8} & \bf{86.4} & \bf{85.7} \\
		\toprule
	\end{tabular}
\caption{Result for features on CIRAR-10 with ResNet18/50 under DP condition. We select 512 features from ResNet50 by DFS, then we see  ResNet50 performs better than ResNet18. But the test accuracy of ResNet50 by t-statistics  is less than ResNet18.}
\label{accu}
\end{table}
\section{Conclusions}
This paper has studied the phenomenon that larger models causes lower classification accuracy under DP.
 To illustrate our idea, we have considered a simple Gaussian model for analysis. When noise or dimension tends to infinity, the classifier using all features performs nearly the same as random guessing. Hence it is necessary to find a  robust distance criterion to reduce the dimension of the model. Theoretically, we have proved that the important features can be selected with probability one.
 Finally,  we have proposed a DFS algorithm to trade off the classification accuracy and privacy-preserving. Simulation  reveals that the proposed DFS algorithm enjoys better performance on real data.
A future direction is to analyze the impact of the dimensionality under DP in a more realistic model.

% Traditional features selection methods can work on clean data, but for DP, we propose our DFS algorithm. We have shown that under mild conditions, the distance criterion can select all the important features with probability one.

% Experiments have been set to show our theorem in simulation and real data. We also solve the issue we observe in introduction and show how our algorithm works with a neural network. All experiments support our theoretical results convincingly.

%%%%%%%%% REFERENCES
{
    % \clearpage
    \small
    \bibliographystyle{ieee_fullname}
    \bibliography{macros,main}

\begin{thebibliography}{10}\itemsep=-1pt

\bibitem{2014Differentially}
R. Bassily, A. Smith, and A. Thakurta.
\newblock Differentially private empirical risk minimization: Efficient
  algorithms and tight error bounds.
\newblock {\em Computer Science}, 2014.

\bibitem{cao2007moderate}
Hongyuan Cao.
\newblock Moderate deviations for two sample t-statistics.
\newblock {\em ESAIM: Probability and Statistics}, 11:264--271, 2007.

\bibitem{chen2020more}
Lin Chen, Yifei Min, Mingrui Zhang, and Amin Karbasi.
\newblock More data can expand the generalization gap between adversarially
  robust and standard models.
\newblock In {\em International Conference on Machine Learning}, pages
  1670--1680. PMLR, 2020.

\bibitem{chen2020understanding}
Xiangyi Chen, Steven~Z Wu, and Mingyi Hong.
\newblock Understanding gradient clipping in private sgd: a geometric
  perspective.
\newblock {\em Advances in Neural Information Processing Systems}, 33, 2020.

\bibitem{6817512}
Xue-Wen Chen and Xiaotong Lin.
\newblock Big data deep learning: Challenges and perspectives.
\newblock {\em IEEE Access}, 2:514--525, 2014.

\bibitem{dupuy2021efficient}
Christophe Dupuy, Radhika Arava, Rahul Gupta, and Anna Rumshisky.
\newblock An efficient dp-sgd mechanism for large scale nlp models.
\newblock {\em arXiv preprint arXiv:2107.14586}, 2021.

\bibitem{dwork2008differential}
Cynthia Dwork.
\newblock Differential privacy: A survey of results.
\newblock In {\em International Conference on Theory and Applications of Models
  of Computation}, pages 1--19. Springer, 2008.

\bibitem{dwork2014algorithmic}
Cynthia Dwork, Aaron Roth, et~al.
\newblock The algorithmic foundations of differential privacy.
\newblock {\em Foundations and Trends in Theoretical Computer Science},
  9(3-4):211--407, 2014.

\bibitem{fan2008high}
Jianqing Fan and Yingying Fan.
\newblock High dimensional classification using features annealed independence
  rules.
\newblock {\em Annals of Statistics}, 36(6):2605, 2008.

\bibitem{fung2010privacy}
Benjamin~CM Fung, Ke Wang, Rui Chen, and Philip~S Yu.
\newblock Privacy-preserving data publishing: A survey of recent developments.
\newblock {\em ACM Computing Surveys (Csur)}, 42(4):1--53, 2010.

\bibitem{gheisari2017survey}
Mehdi Gheisari, Guojun Wang, and Md~Zakirul~Alam Bhuiyan.
\newblock A survey on deep learning in big data.
\newblock In {\em 2017 IEEE International Conference on Computational Science
  and Engineering (CSE) and IEEE International Conference on Embedded and
  Ubiquitous Computing (EUC)}, volume~2, pages 173--180. IEEE, 2017.

\bibitem{goodfellow2016deep}
Ian Goodfellow, Yoshua Bengio, and Aaron Courville.
\newblock {\em Deep learning}.
\newblock MIT press, 2016.

\bibitem{grvcar2005data}
Miha Gr{\v{c}}ar, Dunja Mladeni{\v{c}}, Bla{\v{z}} Fortuna, and Marko
  Grobelnik.
\newblock Data sparsity issues in the collaborative filtering framework.
\newblock In {\em International Workshop on Knowledge Discovery on the Web},
  pages 58--76. Springer, 2005.

\bibitem{hall2005geometric}
Peter Hall, James~Stephen Marron, and Amnon Neeman.
\newblock Geometric representation of high dimension, low sample size data.
\newblock {\em Journal of the Royal Statistical Society: Series B (Statistical
  Methodology)}, 67(3):427--444, 2005.

\bibitem{hao2015sparsifying}
Ning Hao, Bin Dong, and Jianqing Fan.
\newblock Sparsifying the fisher linear discriminant by rotation.
\newblock {\em Journal of the Royal Statistical Society. Series B, Statistical
  methodology}, 77(4):827, 2015.

\bibitem{hart2000pattern}
Peter~E Hart, David~G Stork, and Richard~O Duda.
\newblock {\em Pattern classification}.
\newblock Wiley Hoboken, 2000.

\bibitem{hua2009performance}
Jianping Hua, Waibhav~D Tembe, and Edward~R Dougherty.
\newblock Performance of feature-selection methods in the classification of
  high-dimension data.
\newblock {\em Pattern Recognition}, 42(3):409--424, 2009.

\bibitem{ilyas2019adversarial}
Andrew Ilyas, Shibani Santurkar, Dimitris Tsipras, Logan Engstrom, Brandon
  Tran, and Aleksander Madry.
\newblock Adversarial examples are not bugs, they are features.
\newblock {\em arXiv preprint arXiv:1905.02175}, 2019.

\bibitem{lewis2004rcv1}
David~D Lewis, Yiming Yang, Tony Russell-Rose, and Fan Li.
\newblock Rcv1: A new benchmark collection for text categorization research.
\newblock {\em Journal of Machine Learning Research}, 5:361--397, 2004.

\bibitem{li2019multiclass}
Yanming Li, Hyokyoung~G Hong, and Yi Li.
\newblock Multiclass linear discriminant analysis with ultrahigh-dimensional
  features.
\newblock {\em Biometrics}, 75(4):1086--1097, 2019.

\bibitem{liu2017deep}
Bo Liu, Ying Wei, Yu Zhang, and Qiang Yang.
\newblock Deep neural networks for high dimension, low sample size data.
\newblock In {\em International Joint Conference on Artificial Intelligence},
  pages 2287--2293, 2017.

\bibitem{schmidt2018adversarially}
Ludwig Schmidt, Shibani Santurkar, Dimitris Tsipras, Kunal Talwar, and
  Aleksander Madry.
\newblock Adversarially robust generalization requires more data.
\newblock In {\em Advances in Neural Information Processing Systems}, 2018.

\bibitem{tramer2020differentially}
Florian Tram{\`{e}}r and Dan Boneh.
\newblock Differentially private learning needs better features (or much more
  data).
\newblock In {\em 9th International Conference on Learning Representations,
  {ICLR} 2021, Virtual Event, Austria, May 3-7, 2021}. OpenReview.net, 2021.

\bibitem{2003Optimal}
A.~B. Tsybakov.
\newblock Optimal rates of aggregation.
\newblock {\em Digital Bibliography \& Library Project}, 2003.

\bibitem{xu2019laplace}
Yahong Xu, Geng Yang, and Shuangjie Bai.
\newblock Laplace input and output perturbation for differentially private
  principal components analysis.
\newblock {\em Security and Communication Networks}, 2019, 2019.

\end{thebibliography}
}

% --- supplementary material
% \appendix

% --- PDF will be split by an editor (e.g. macOS preview), so need to restart from page 1
\setcounter{page}{1}

% --- repeat the title (AT: haven't found a more elegant way to do this...)
\twocolumn[
\centering
\Large
\textbf{Towards Understanding the Impact of Model Size on Differential Private Classification} \\
\vspace{0.5em}Supplementary Material \\
\vspace{1.0em}
] %< twocolumn
\appendix

\section{Appendix}

\subsection{DP-SGD CNN for MNIST}
In this experiment, we use CNN for about 26k parameters and 52k parameters by widening layers to train on MNIST using DP-SGD. The result is similar comparing with the introduction that models with more parameters perform worse.

\begin{figure}[htb]
%\hrule %\vspace*{8cm}
\vspace{-0.5cm}
\vbox to 5.5cm{\vfill \hbox to \hsize{\hfill
\scalebox{0.3}[0.305]{\includegraphics{./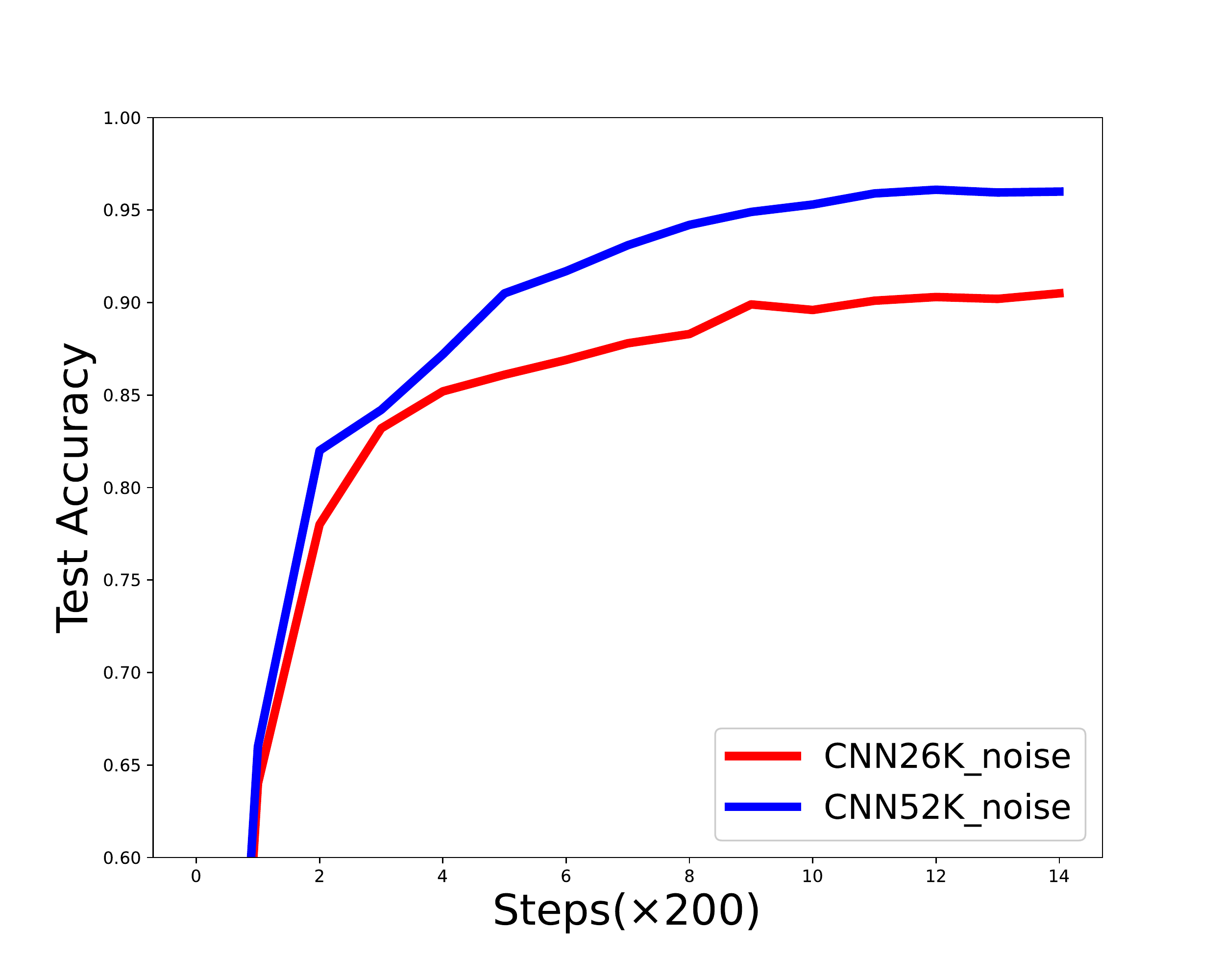}} \hfill}\vfill}
%\hrule
\caption{Accuracy for CNN using 52k parameters after DP-SGD perform 5\% lower than 26k parameters.}
\label{Fig_MN}
\end{figure}

This also shows that both width and depth of neural network would influence accuracy, leading to problem towards dimension.

\subsection{Proof of Theorem 6}
Before proof, we agree that character with hat is observation in this proof and truth value without hat.
\begin{proof}
First, since for normal distribution, if we have $x \sim \mathcal{N}(a,b)$ and $ y \sim \mathcal{N}(c,d) $, assume $x$ and $y$ are independent, then
\begin{align*}
    x+ y \sim \mathcal{N}(a+c,b+d)
\end{align*}
Thus data with perturbation can regard as a new data set. We will show result with new $\mSigma \triangleq \mSigma + \sigma^2 * \mathbf{I}_p$.
Also, we define $ \epsilon_{i j}$ is the bias for data i and feature j from true means, $S_j$ is average estimation variance for feature j and $\hat{\epsilon}_{k j }$ is the average bias for class k and feature j.

For estimation $\hat{\Sigma}$, we have following inequality:
\begin{align*}
P\left(\max _{j=1, \cdots, p}\left|S_{j}^{2}-\sigma_{j}^{2}\right|>\varepsilon\right) \leq \sum_{j=1}^{p} P\left(\left|S_j^{2}-\sigma_{j}^{2}\right|>\varepsilon\right)  \\ \leq \sum_{j=1}^{p} P\left(\left|\sum_{i=1}^{n}\left(\epsilon_{i j}^{2}-\sigma_{j}^{2}\right)\right|>n \varepsilon \right) \equiv I_1.
\end{align*}
It follows from Bernstein's inequality that
\begin{align*}
P\left(\left|\sum_{i=1}^{n_{k}}\left(\epsilon_{k i j}^{2}-\sigma_{k j}^{2}\right)\right|>n \varepsilon \right) \leq 2  \exp \left\{-\frac{c}{2} \frac{n^{2} \varepsilon^{2}}{\sum_{j = 1}^{p}\sigma^2_j}\right\},
\end{align*}
where c is the parameter for Bernstein's inequality.

Since $\log p=o(n)$, when $p \rightarrow \infty $, $n \rightarrow \infty$. So $I_{1}=o_p(1)$. Thus $P\left(\max _{j=1, \cdots, p}\left|S_{j}^{2}-\sigma_{j}^{2}\right|>\varepsilon\right) \stackrel{P}{\rightarrow} 0 $. So $\hat{\mSigma} = (1+o_p(1))\mSigma$.

Then we back to definition of classification error, since we assume $\Sigma$ is already a diagonal matrix, after simplification, it can be written in form $W(\hat{\delta}, \theta) = 1- \Phi(\Psi)$ where \begin{align*}
    \Psi \geq\frac{\left(\mu_{1}-\hat{\mu}\right) \prime \hat{\mathbf{\Sigma}}^{-1}\left(\hat{\mu}_{1}-\hat{\mu}_{2}\right)}{\sqrt{\left(\hat{\mu}_{1}-\hat{\mu}_{2}\right) \prime \mathbf{\Sigma}^{-1} \left(\hat{\mu}_{1}-\hat{\mu}_{2}\right)}}(1+o_p(1)),
\end{align*}

Since $\hat{\mSigma} = (1+o_p(1))\mSigma$, $\hat{\mathbf{\Sigma}}^{-1} = (1+o_p(1))\mSigma^{-1} $.

For numerator, we have
\begin{align*}
& \left(\mu_{1}-\hat{\mu}\right) \prime \hat{\mathbf{\Sigma}}^{-1}\left(\hat{\mu}_{1}-\hat{\mu}_{2}\right) =\frac{1}{2} \alpha \prime \hat{\mathbf{\Sigma}}^{-1} \alpha \\ & - \frac{1}{2}\left(1+o_{P}(1)\right) \sum \frac{\hat{\epsilon}_{1 j}^{2}}  {\sigma_{j}^{2}}+\frac{1}{2}\left(1+o_{P}(1)\right) \sum \frac{\hat{\epsilon}_{2 j}^{2}}{\sigma_{j}^{2}}.
\end{align*}
The third term is in the same form with fourth, so they vanish.

For denominator, it is complicated, but in the same way.\\
\begin{align*}
\left(\hat{\mu}_{1}-\hat{\mu}_{2}\right) \prime \mathbf{\Sigma}^{-1}\left(\hat{\mu}_{1}-\hat{\mu}_{2}\right) &=\alpha \prime \mathbf{\Sigma}^{-1} \alpha+2 \sum \alpha_{j} \frac{\hat{\epsilon}_{1 j} \hat{\epsilon}_{2 j}}{\sigma_{j}^{2}}\\ &+\sum \frac{\left(\hat{\epsilon}_{1 j}-\hat{\epsilon}_{2 j}\right)^{2}}{\sigma_{j}^{2}} \\
&=\alpha \prime \mathbf{\Sigma}^{-1} \alpha+2 \sum \frac{\alpha_{j}}{\sigma_{j}^{2}}\left(\hat{\epsilon}_{1 j}-\hat{\epsilon}_{2 j}\right) \\ & + \sum \frac{\left(\hat{\epsilon}_{1 j}-\hat{\epsilon}_{2 j}\right)^{2}}{\sigma_{j}^{2}}
\end{align*}

The third term is with distribution $\hat{\epsilon}_{1 j} - \hat{\epsilon}_{2 j} \sim \mathcal{N}(0,(4/n) \sigma_j^2)$. In term, it need to divide $\sigma_j^2$, so it converges to $4/n$.
\begin{align*}
    \sum \frac{\left(\hat{\epsilon}_{1 j}-\hat{\epsilon}_{2 j}\right)^{2}}{\sigma_{j}^{2}} \stackrel{P}{\rightarrow} \frac{4p}{n}
\end{align*}

Then the second term is the same.  $\frac{\alpha_{j}}{\sigma_{j}^{2}}\left(\hat{\epsilon}_{1 j}-\hat{\epsilon}_{2 j}\right) \sim N(0,(4/n)\alpha_j \sigma_j^{-2} \alpha_j)$. Then as variance is $o_p(1)$, so the whole term is in the order of $o_p(1)\alpha \prime \hat{ \mSigma }^{-1} \alpha$.

Finally, together above result, we can complete our proof.
\end{proof}

\subsection{Multi-class  private GM}
In this section, we extend the binary private GM model to multi-class model. Let us define the multi-class private GM as follows.
% Classic LDA classifier is designed for binary problem. However, according to \cite{li2019multiclass}, this can extend to multi-class.

% For multi-class private GM, definition and marks are close to passage with label from 2 to k.

\begin{definition}\label{label}(Multi-class Private GM)
For $K$-class private GM, let $\mu_k \in \mathbb{R}^{p}$, $k=\{1, ..., K\}$, be the per-class mean vector and
\begin{align}
    \mathbf{\Sigma} \triangleq {\rm diag} (\sigma_1^2, ..., \sigma_p^2)
\end{align}
be the variance parameter. $(\epsilon, \delta )-$private Gaussian mixture model is defined by the following distribution over $(\hat{x}_k,k)\in \mathbb{R}^{p}\times\{1, ..., K\}$:  First, draw a label $k$ from $\{1,...,K\}$ uniformly at random, then sample the data point $x_k\in \R^p$ from $\mathcal{N}(\mu_k, \mathbf{\Sigma})$. Then we get a non-private dataset $\{x_{k}^i,k\}$, $k=\{1,...,K\}$, $i=1,\ldots,n_k$ where $n_k$ is sample size of label $k$. Finally, according Gaussian mechanism to obtain private dataset $\{\hat{x}_k^i,k\}$, where
\begin{align}
  \hat{x}_k^i = x_k^i +
  2C_p\ln(1/ \delta)/ \epsilon \cdot (\eta_1, ..., \eta_p),
\end{align}
where $\eta_i$ are i.i.d variables $\eta_i \sim \mathcal{N}(0,1)$ and $C_p \triangleq \max_{k \in \{1,...,K\}, i \leq n_k} \| x_k^i \|_1$ is a constant depending on dimension $p$.
\end{definition}

Similar to definition for private GM, estimation of parameters above is  \begin{align} \label{calcu}
    \hat{\mu}_{k}=  \frac{1} { n_{k}}\sum_{i=1}^{n_{k}} \hat{x}_{k}^i , k \in \{1,...,K\}, \\ \hat{\mathbf{\Sigma}}=\operatorname{diag}\left\{\frac{\sum_{k = 1}^K S_{k l}^2}{K}, l=1, ..., p\right\},
\end{align}
where $ S_{k l}^{2}=\frac{1}{\left(n_{k}-1\right)}\sum_{i=1}^{n_{k}}\left(\hat{x}_{k l}^i-\bar{x}_{k l}\right)^{2}  $ is the sample variance of the $l$-th feature in class $k$ and $\bar{x}_{k l}=\frac{1} { n_{k}}\sum_{i=1}^{n_{k}} \hat{x}_{k l}^i$.

For binary model, a point $x$ is labeled 1 if the fisher classifier satisfies
%In definition of binary fisher classifier, a point is labeled 1 if
\begin{align*}
    (x - (\hat{\mu}_1+\hat{\mu}_2)/2)\prime \hat{\mSigma}^{-1}(\hat{\mu}_1-\hat{\mu}_2) > 0.
\end{align*}
Rewriting the form above, it is equivalent to \begin{align}
    (x - (\hat{\mu}_1+\hat{\mu}_2)/2)\prime \hat{\mSigma}^{-1}\hat{\mu}_1 > (x - (\hat{\mu}_1+\hat{\mu}_2)/2)\prime \hat{\mSigma}^{-1}\hat{\mu}_2 \\ \Leftrightarrow
    (x - \hat{\mu}_1/2)\prime \hat{\mSigma}^{-1}\hat{\mu}_1 > (x - \hat{\mu}_2/2)\prime \hat{\mSigma}^{-1}\hat{\mu}_2.
\end{align} \label{trans}
Thus it can be defined as the maximum of $(x - \hat{\mu}_i/2)\prime \hat{\mSigma}^{-1}\hat{\mu}_i (i \in \{1,2\})$. So we define multi-class LDA as follow.

\begin{definition}(Multi-class LDA classifier) \cite{li2019multiclass}
For a K-class classification with means $\{\mu_1, ..., \mu_k\}$ and the same variance $\mSigma$, a LDA classifier is defined for a new point x as \begin{align*}
    \hat{Y} = \argmax_{i \in {1, ..., K}} (x - \hat{\mu}_i/2)\prime \hat{\mSigma}^{-1}\hat{\mu}_i
\end{align*}
where $\hat{Y}$ is predicted label.
\end{definition}

% Using this multi-class LDA, a fact is that the correct label should be the maximum to make prediction correct, otherwise a wrong label will be given. Equivalently, reversing process above, the correct label need to win every single binary classifier, so here is an upper error bound about multi-class problem.

The next theorem is about classification error for any classes more than 1. In this theorem, we prove that general classification error of multi-class LDA classifier for a point from class $m$, which defined as \begin{align*}
    \mathbf{W}(\hat{\delta}_n,\theta) \triangleq P(\hat{\delta}_n(x) \neq m|\hat{x}_{k}^i,k \in \{1,...,K\},i=1,\ldots,n_k),
\end{align*}
will increase with perturbation and dimension accumulating. In simple, we rewrite condition $\{\hat{x}_{k}^i,k \in \{1,...,K\},i=1,\ldots,n_k\}$ as $\{X\}$.

\begin{theorem}
for K-class problem with sample size $n_i = n_j (i,j \in \{1, ..., k\})$, $\sum_i n_i = n$ with dimension $log(p)$ = $o(n)$ and $n$ = $o(p)$ classification error for class $m$ is bounded as
\begin{align*}
    \mathbf{W}(\hat{\delta}_n, \theta) \leq 1- \prod \limits_{i \neq m, 1 \leq i \leq K} \Phi \left( \frac{ \left(1+o_{p}(1)\right)\Gamma_i}{2\left[\frac{4Kp}{2n}+ \left(1+o_{p}(1)  \right)\Gamma_i\right]^{\frac{1}{2}}}\right)
\end{align*}
where \begin{align*}
    \Gamma_i \triangleq \sum_{j = 1}^{p} \frac{\alpha_{ij}^2}{\sigma_j^2 + (2C_p\ln(1/\delta)/\epsilon)^2}
\end{align*}
and $\alpha_{ij}$ is the j-th of $\hat{\mu}_m - \hat{\mu}_i$, $\sigma_j$ in definition\ref{label} and \ref{calcu}.
\end{theorem}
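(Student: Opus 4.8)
The plan is to reduce the $K$-class error to $K-1$ binary comparisons and then reuse, coordinate by coordinate, the asymptotic analysis already carried out in the proof of Theorem~\ref{thm_1}. First I would condition on the private training sample, abbreviated $\{X\}$ as in the statement, and set $\sigma\triangleq 2C_p\ln(1/\delta)/\epsilon$ and $\Sigma_{\mathrm{new}}\triangleq\Sigma+\sigma^2\mathbf{I}_p$, so that $\Gamma_i=\sum_j\alpha_{ij}^2/(\sigma_j^2+\sigma^2)$. By the algebraic identity used to motivate the multi-class LDA rule (the chain of equivalences displayed just before Definition of the multi-class LDA classifier), a test point $x$ drawn from class $m$ is classified correctly exactly when, for every $i\neq m$,
\[
 g_i(x)\triangleq\big(x-\tfrac12(\hat\mu_m+\hat\mu_i)\big)'\hat\Sigma^{-1}(\hat\mu_m-\hat\mu_i)>0,
\]
i.e. the binary LDA discriminant between $m$ and $i$ votes for $m$. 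Hence $\mathbf{W}(\hat\delta_n,\theta)=1-P\big(\bigcap_{i\neq m}\{g_i(x)>0\}\mid\{X\}\big)$, and it suffices to lower bound this conditional probability.

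Second, I would compute $P(g_i(x)>0\mid\{X\})$ for a fixed $i$. As in the binary proof, the $(\epsilon,\delta)$-perturbation turns the data into i.i.d. $\mathcal{N}(\mu_k,\Sigma_{\mathrm{new}})$; conditional on $\{X\}$, $g_i(x)$ is an affine functional of the Gaussian vector $x\sim\mathcal{N}(\mu_m,\Sigma_{\mathrm{new}})$, so $P(g_i(x)>0\mid\{X\})=\Phi(\Psi_i)$ with
\[
 \Psi_i=\frac{\big(\mu_m-\tfrac12(\hat\mu_m+\hat\mu_i)\big)'\hat\Sigma^{-1}(\hat\mu_m-\hat\mu_i)}{\sqrt{(\hat\mu_m-\hat\mu_i)'\hat\Sigma^{-1}\Sigma_{\mathrm{new}}\hat\Sigma^{-1}(\hat\mu_m-\hat\mu_i)}}.
\]
Now I would replay the appendix computation for Theorem~\ref{thm_1} verbatim with $\alpha$ replaced by $\alpha_i=\mu_m-\mu_i$ and $n_k=n/K$: Bernstein's inequality plus a union bound over the $p$ coordinates gives $\hat\Sigma=(1+o_p(1))\Sigma_{\mathrm{new}}$ (this is where $\log p=o(n)$ enters); the numerator collapses to $\tfrac12(1+o_p(1))\alpha_i'\Sigma_{\mathrm{new}}^{-1}\alpha_i=\tfrac12(1+o_p(1))\Gamma_i$; and the quadratic form in the denominator equals $\alpha_i'\Sigma_{\mathrm{new}}^{-1}\alpha_i$ plus a cross term that is $o_p(1)$ relative to $\Gamma_i$ plus $\sum_j(\hat\epsilon_{mj}-\hat\epsilon_{ij})^2/(\sigma_j^2+\sigma^2)$, the last tending to $4Kp/(2n)$ because with balanced classes $\hat\epsilon_{mj}-\hat\epsilon_{ij}\sim\mathcal{N}(0,2K(\sigma_j^2+\sigma^2)/n)$. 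Together these give exactly the $\Psi_i$ appearing in the statement.

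Third, I would combine the $K-1$ events via $P\big(\bigcap_{i\neq m}\{g_i(x)>0\}\mid\{X\}\big)\ge\prod_{i\neq m}\Phi(\Psi_i)$, which yields the theorem. The cleanest justification is Slepian's normal comparison inequality: conditional on $\{X\}$ the vector $(g_i(x))_{i\neq m}$ is jointly Gaussian with pairwise covariances $\hat a_i'\hat\Sigma^{-1}\Sigma_{\mathrm{new}}\hat\Sigma^{-1}\hat a_j\approx\alpha_i'\Sigma_{\mathrm{new}}^{-1}\alpha_j=\sum_l\alpha_{il}\alpha_{jl}/(\sigma_l^2+\sigma^2)$, and when these are nonnegative the joint survival probability dominates the product of the marginal survival probabilities.

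I expect this last step to be the main obstacle: unlike the binary case, the $K-1$ comparison events are genuinely dependent (they all involve the single test point $x$), so the product form does not come for free, and it is certainly tighter than the naive union bound $\sum_{i\neq m}(1-\Phi(\Psi_i))$. The honest resolutions are either (a) to add the hypothesis that the limiting pairwise covariances $\sum_l\alpha_{il}\alpha_{jl}/(\sigma_l^2+\sigma^2)$ are nonnegative --- true, for instance, whenever all mean-shift vectors $\mu_m-\mu_i$ lie in a common orthant --- and invoke Slepian, or (b) to read $1-\prod_{i\neq m}\Phi(\Psi_i)$ as the asymptotically exact value attained when the inter-class comparisons decouple; either way, everything else is a routine, coordinate-wise repetition of the argument for Theorem~\ref{thm_1}, applied uniformly over the finitely many indices $i\neq m$.
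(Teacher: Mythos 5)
Your route is essentially the paper's: you reduce correct classification of a class-$m$ point to the intersection of the $K-1$ pairwise LDA events $\{g_i(x)>0\}$, redo the binary computation of Theorem~\ref{thm_1} for each $i\neq m$ with $\alpha$ replaced by $\mu_m-\mu_i$ and $n_k=n/K$ (getting the $\Phi(\Psi_i)$ marginals and the $4Kp/(2n)$ term exactly as in the statement), and then pass to the product over $i\neq m$. The only place you diverge is the step you yourself flag as the main obstacle, and there you are in fact \emph{more} careful than the paper: the paper factorizes $P\bigl(\bigcap_{i\neq m}\kappa_i\mid X\bigr)$ by the chain rule and then drops the conditioning with the one-line justification that ``$\mu_k$ is drawn independently,'' i.e.\ it treats the $K-1$ comparison events as conditionally independent given the training data. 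As you correctly observe, conditional on $X$ the estimates $\hat\mu_k,\hat\Sigma$ are fixed and the only remaining randomness is the single test point $x$, so the $g_i(x)$ are jointly Gaussian with generically nonzero covariances $\hat a_i'\hat\Sigma^{-1}\Sigma_{\mathrm{new}}\hat\Sigma^{-1}\hat a_j$, and the product identity does not hold in general; the independence of the population means $\mu_k$ is irrelevant to this conditional statement. Your proposed repairs --- invoking Slepian's comparison inequality under an added nonnegative-correlation hypothesis on $\sum_l\alpha_{il}\alpha_{jl}/(\sigma_l^2+\sigma^2)$, or falling back to a union bound of the form $1-\sum_{i\neq m}\bigl(1-\Phi(\Psi_i)\bigr)$ --- are the honest ways to close this; the first recovers the theorem's product form at the price of a hypothesis absent from the statement, the second needs no hypothesis but weakens the bound. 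In short, your marginal analysis matches the paper's proof step for step, and the gap you identify is not in your argument but in the paper's own combination step, which your write-up makes explicit rather than assuming away.
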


\begin{proof}
Proof by contradiction, if there exists another class $m_0$ which point x should be in $m_0$ in binary LDA classification with m, then we have:
\begin{align*}
    (x - (\hat{\mu}_{m}+\hat{\mu}_{m_0})/2)\prime \hat{\mSigma}^{-1}(\hat{\mu}_{m}-\hat{\mu}_{m_0}) < 0,
\end{align*}
which equivalents to inequality\begin{align*}
    (x - \hat{\mu}_{m_0}/2)\prime \hat{\mSigma}^{-1}\hat{\mu}_{m_0} > (x - \hat{\mu}_m/2)\prime \hat{\mSigma}^{-1}\hat{\mu}_m
\end{align*}
with the same transformation of (\ref{trans}). Since class m is not the largest $ (x - \hat{\mu}_{i}/2)\prime \hat{\mSigma}^{-1}\hat{\mu}_{i}$, this situation is not legal for a correct classification.

Thus it can be represented by K-1 events,
\begin{align*}
    \kappa_i = \{(x - (\hat{\mu}_{m}+\hat{\mu}_{i})/2)\prime \hat{\mSigma}^{-1}(\hat{\mu}_{m}-\hat{\mu}_{i})>0\}.
\end{align*}
In simplified form, $1-\mathbf{W}(\hat{\delta}_n, \theta)$ can be written as
\begin{align*}
  P(\delta_n(x) = m| X)  &= P(\kappa_1, ..., \kappa_K|X) \\ &= P(\kappa_1|X) P(\kappa_2|\kappa_1,X)\\ &...P(\kappa_{m-1}|\kappa_1,...\kappa_{m-2},X) \\ & *P(\kappa_{m+1}|\kappa_1,...\kappa_{m-1},X) \\ & ...P(\kappa_{K}|\kappa_1,...\kappa_{m-1},\kappa_{m+1}...,\kappa_{K-1},X).
\end{align*}
Since $\mu_k$ is drawn independently, conditions in probability can be removed. Thus \begin{align*}
    P(\delta_n(x) = m| X) = \prod \limits_{1\leq i \leq K, i \neq m}P(\kappa_i|X).
\end{align*}
Together with binary LDA classifier in Theorem 6, we can conclude that
\begin{align*}
    & P(\delta_n(x) = m| X) \geq \\& \prod \limits_{i \neq m, 1 \leq i \leq K} \Phi \left( \frac{ \left(1+o_{p}(1)\right)\Gamma_i}{2\left[\frac{4Kp}{2n}+ \left(1+o_{p}(1)  \right)\Gamma_i\right]^{\frac{1}{2}}}\right).
\end{align*}
It completes the proof.
\end{proof}

This theorem shows in multi-class problem, our remark holds that with dimension and perturbation increasing, performance of classifier will decrease and drops to $\frac{1}{2^{k-1}}$.

\subsection{Lemma 2}\cite{cao2007moderate}
Let $n=n_{1}+n_{2} .$ Assume that there exist $0<c_{1} \leq c_{2}<1$ such that $c_{1} \leq n_{1} / n_{2} \leq$ $c_{2} .$ Let ${\tilde{T}}_{j}=T_{j}-\frac{\mu_{j 1}-\mu_{j 2}}{\sqrt{s_{1 j}^{2} / n_{1}+S_{1 j}^{2} / n_{2}}}$. Then for any $x \equiv x\left(n_{1}, n_{2}\right)$ satisfying $x \rightarrow \infty$ and $x=o\left(n^{1 / 2}\right)$,
$$
\log P\left(\tilde{T}_{j} \geq x\right) \sim-x^{2} / 2, \quad \text { as } \quad n_{1}, n_{2} \rightarrow \infty
$$
If in addition, if we have $E\left|Y_{1 i j}\right|^{3}<\infty$ and $E\left|Y_{2 i j}\right|^{3}<\infty$, then
$$
\frac{P\left(\tilde{T}_{j} \geq x\right)}{1-\Phi(x)}=1+O(1)(1+x)^{3} n^{-1 / 2} d^{3}, \quad \text { for } \quad 0 \leq x \leq n^{1 / 6} / d
$$
where \\ $d=\left(E\left|X_{1 i j}\right|^{3}+E\left|X_{2 i j}\right|^{3}\right) /\left(\operatorname{var}\left(X_{1 i j}\right)+\operatorname{var}\left(X_{2 i j}\right)\right)^{3 / 2}$ and $O(1)$ is a finite constant depending only on $c_{1}$ and $c_{2} $.

\subsection{Proof of Theorem 8}
\begin{proof}
First, since we consider Gaussian distribution, so lemma 2 is always tenable in below proof, we will use it directly.

Second, take into two parts.
a) First, we check probability $P(max_{j>s}|D_j|>x)$. For any probability, it is clear that $$
P\left(\max _{j>s}\left|D_{j}\right|>x\right) \leq \sum_{j=s+1}^{p} P\left(\left|D_{j}\right| \geq x\right).
$$
With lemma 2 and the max variance bounded after normalization, we can infer that $$
P(\max _{j>s}|D_j|>v\frac{2}{\sqrt{n}}x) \leq (1-\Phi(x))\left(1+C(1+x)^{3} n^{-1 / 2} d^{3}\right)
$$ with $d=\left(E\left|Y_{1 i j}\right|^{3}+E\left|Y_{2 i j}\right|^{3}\right) /\left(\sigma_{1 j}^{2}+\sigma_{2 j}^{2}\right)^{3 / 2}.$
\\Since $|T_j|$ obtain following inequality:$$
|T_j| = \frac{|D_j|}{\sqrt{s_{1j}^2/n_1+s_{2j}^2/n_2}} \geq \frac{|D_j|}{v}\sqrt{\frac{n_1n_2}{n}}\geq \frac{\sqrt{n}}{2}\frac{|D_j|}{v}.
$$
Also, with normal distribution
$$
1-\Phi(x) = \frac{1}{\sqrt{2\pi}}\int_{x}^{\infty} e^{-x^2/2} dx < \frac{1}{\sqrt{2\pi}}\int_{x}^{\infty} e^{-xy/2} dy,
$$we can give that
$$
1-\Phi(x) \leq \frac{1}{\sqrt{2 \pi}} \frac{1}{x} e^{-x^{2} / 2}
$$
This together with the symmetry of $D_{j}$ gives
$$
P(|D_j|>v\frac{2}{\sqrt{n}}x) \leq 2 \frac{1}{\sqrt{2 \pi}} \frac{1}{x} e^{-x^{2} / 2}\left(1+C(1+x)^{3} n^{-1 / 2} d^{3}\right) .
$$
Combining the above inequality, we have
$$
\sum_{j>s} P(|D_j|>v\frac{2}{\sqrt{n}}x) \leq(p-s) \frac{2}{\sqrt{2 \pi}} \frac{1}{x} e^{-x^{2} / 2}\left(1+C(1+x)^{3} n^{-1 / 2} d^{3}\right).
$$
Since $\log (p-s)=o\left(n^{\gamma}\right)$ with $0<\gamma<1/3$, if we let $x = c n^{\gamma / 2}$, that is $y = cvn^{(\gamma-1)/2},$ then
$$
\sum_{j>s} P\left(\left|D_{j}\right| \geq y\right) = n^{\gamma-1/2}
.$$
So we can draw that
$$
\sum_{j>s} P\left(\left|D_{j}\right| \geq y\right) \rightarrow 0.
$$
This equality yields
$$
P\left(\max _{j>s}\left|D_{j}\right| \geq y\right) \rightarrow 0.
$$
b) Then we consider $P(\min_{j\leq s}|D_j|\leq y) $. Notice that when $j \leq s$, $\alpha_j = \mu_{1j}-\mu_{2j} \neq 0$. So also with lemma 2, we define $\tilde{D}_j = D_j - \alpha_j$, it is same like a)
\begin{equation}
 P(\max_{j \leq s}|\tilde{D}_j|>v\frac{2}{\sqrt{n}}x) \rightarrow 0.
 \label{eq_1}
\end{equation}
For addition, there is an inequality
$$
y > \min_{j \leq s}|D_j| = \min_{j \leq s} |\tilde{D_j}+\alpha_j| \geq \min_{j \leq s} |\alpha_j|-\max_{j \leq s} |\tilde{D}_j|.
$$
So in probability
$$
P\left(\min _{j \leq s}\left|D_{j}\right| \leq y\right) \leq P\left(\max _{j > s}\left|\tilde{D}_{j}\right| \geq \min _{j \leq s}\left|\alpha_{j}\right|-y\right).
$$
Then with all assumption above and some $\beta_n \rightarrow \infty$
$$\min _{j \leq s}\left|\alpha_{j}\right|-y = vn^{-\gamma}\beta_n - 2cvn^{(\gamma-1)/2} \geq y.
$$
Together with (\ref{eq_1}), b) is established. Combination two parts complete the theorem.
\end{proof}

\subsection{Means of $\mu_1$ in toy experiment}

Fig. \ref{fig_cov_dis} is a bar figure of  our $\vmu_1$ in toy experiment. We can see most of the features are sparse.

\begin{figure}[htb]
%\hrule %\vspace*{8cm}
\vbox to 5.5cm{\vfill \hbox to \hsize{\hfill
\scalebox{0.3}[0.305]{\includegraphics{./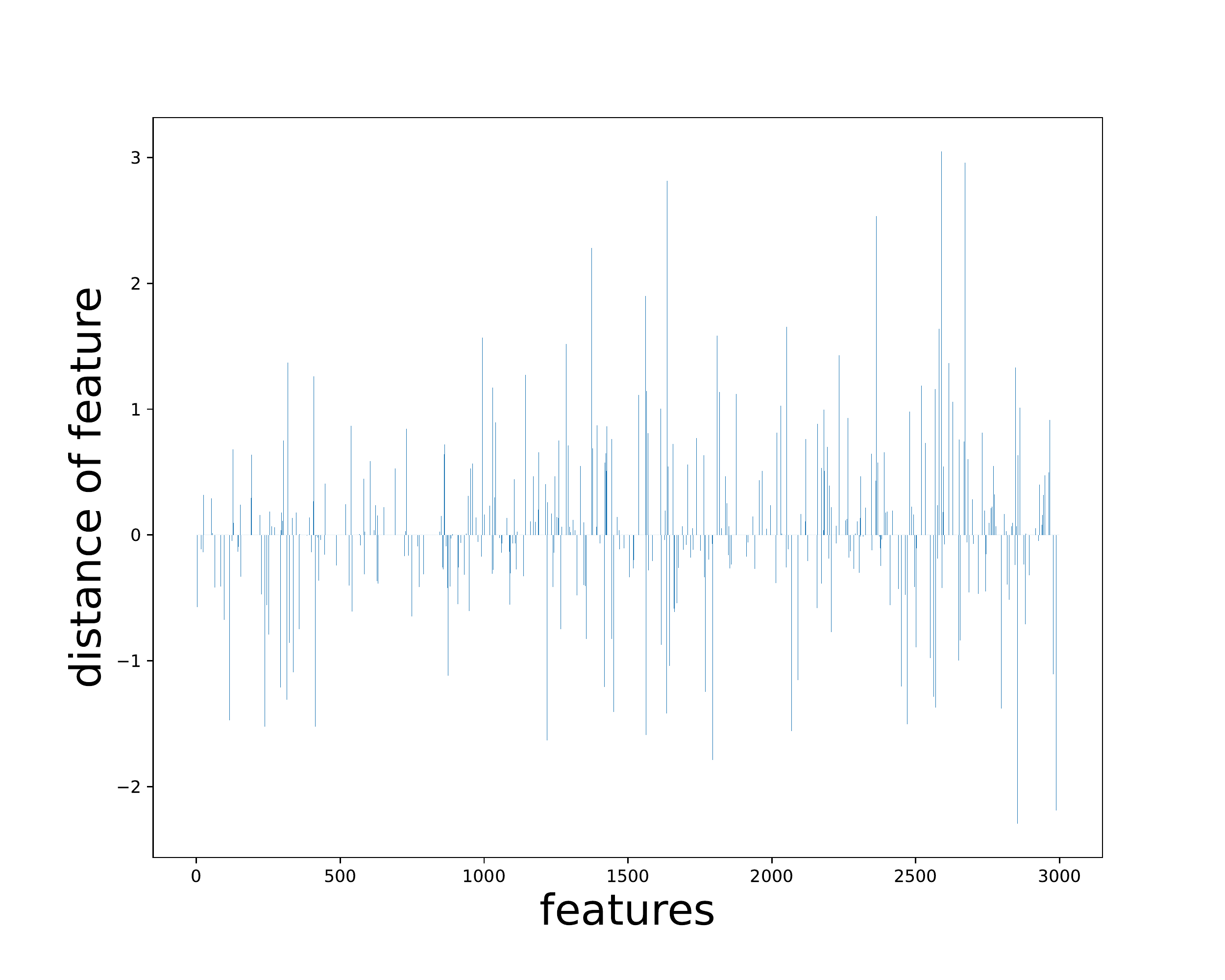}} \hfill}\vfill}
%\hrule
\caption{distance of different classes for all features in synthetic data}\label{fig_cov_dis}
\end{figure}

\subsection{Multiple class criterion}
For LDA's classifier, we consider in binary classification. But our approach can be generalized to multiple classification. We list changed algorithm in CIFAR-10 part.

\begin{algorithm}[htb]
\caption{DP Feature Release Algorithm with $K$ classes}
\textbf{Input:}  [[$\mathbf{X}_{11}$],...,[$\mathbf{X}_{1n_1}$]] to [[$\mathbf{X}_{K1}$],...,[$\mathbf{X}_{Kn_K}$]]\\
Calculate average of features: $\hat{\mu}_1 = [a_{11},...,a_{1p}]$ to $\hat{\mu}_K =  [a_{K1},...,a_{Kp}]$  \\
Calculate max distance of features: $D_j$ = $\max_{c,q\leq K}$|$\hat{\mu}_{qj} - \hat{\mu}_{cj} $| \\
Rank features with distance: $X_{r}$ = [[$x_{1[1]}$,...,$x_{1[p]}$],...,[$x_{n[1]}$,...,$x_{n[p]}$]] \\
Cut the first $m$ features: $X_{c}$ = [[$x_{1[1]}$,...,$x_{1[m]}$],...,[$x_{n[1]}$,...,$x_{n[m]}$]]\\
 Calculate the maximum norm in $X_c$:  $N_{max}\triangleq \max_{i \leq n,X_i \in X_c} \|X_i \|_1$ \\
Generate noise: $n \times m$ matrix $\varepsilon$ with i.i.d. $\varepsilon_{ij} \sim \mathcal{N}(0,2N_{max}\ln(1/\delta)/\epsilon)$\\
Add noise to feature: $ \hat{X} = X_{c} + \varepsilon$\\
\textbf{Output:} feature with noise $\hat{X}$, $Label$
\label{multi}
\end{algorithm}

\newpage

\subsection{LDA classifier for CIFAR-10}

\begin{figure}[hbt]
%\hrule %\vspace*{8cm}
\vspace{-0.5cm}
 \centering
\begin{subfigure}[b]{0.45\columnwidth}
\includegraphics[width=\linewidth]{./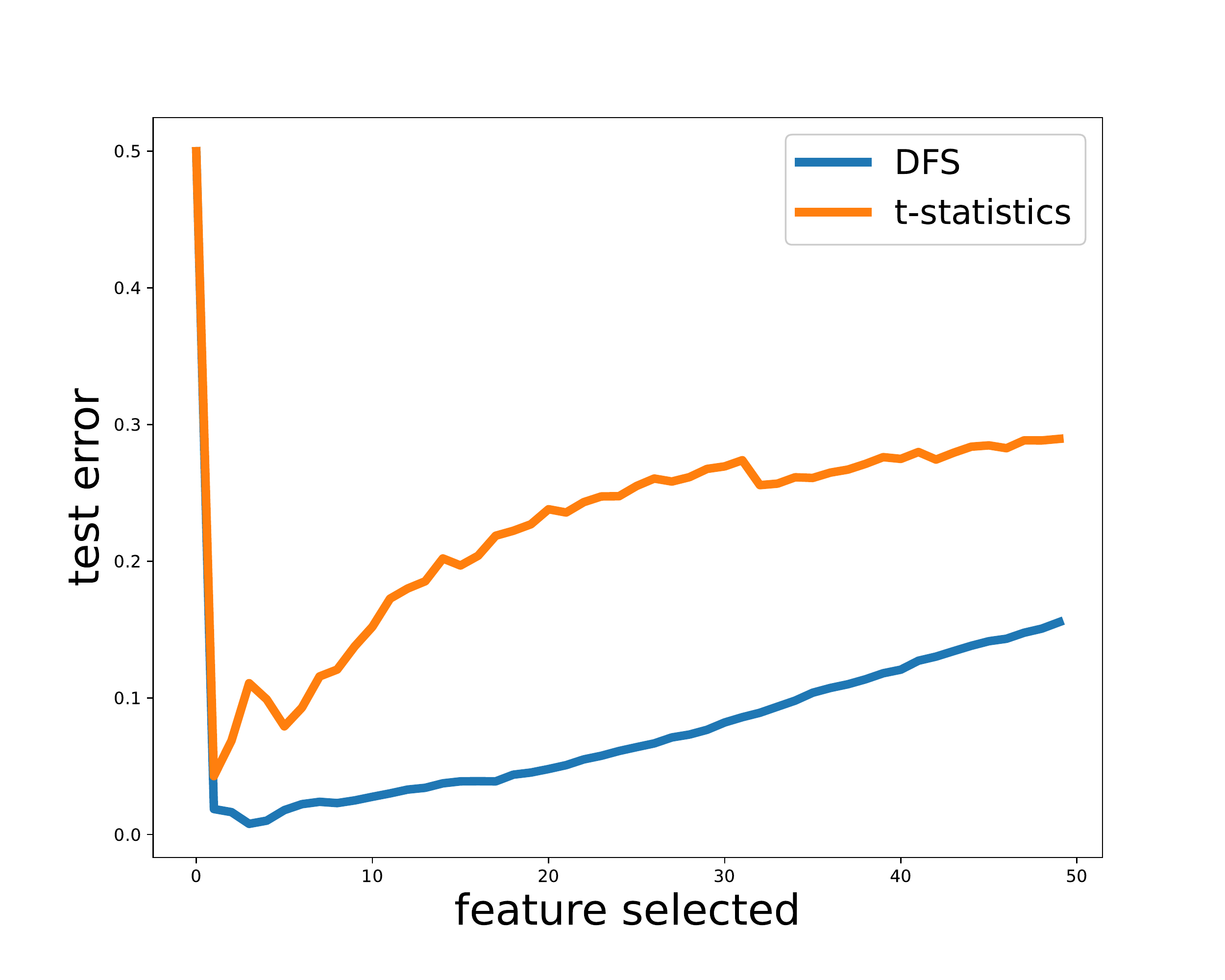}%\hrule
\caption{fixed $\varepsilon = 6$ with $p$ increasing}\label{fig_CIF}
\end{subfigure}
\begin{subfigure}[b]{0.45\columnwidth}
%\hrule %\vspace*{8cm}
\includegraphics[width=\linewidth]{./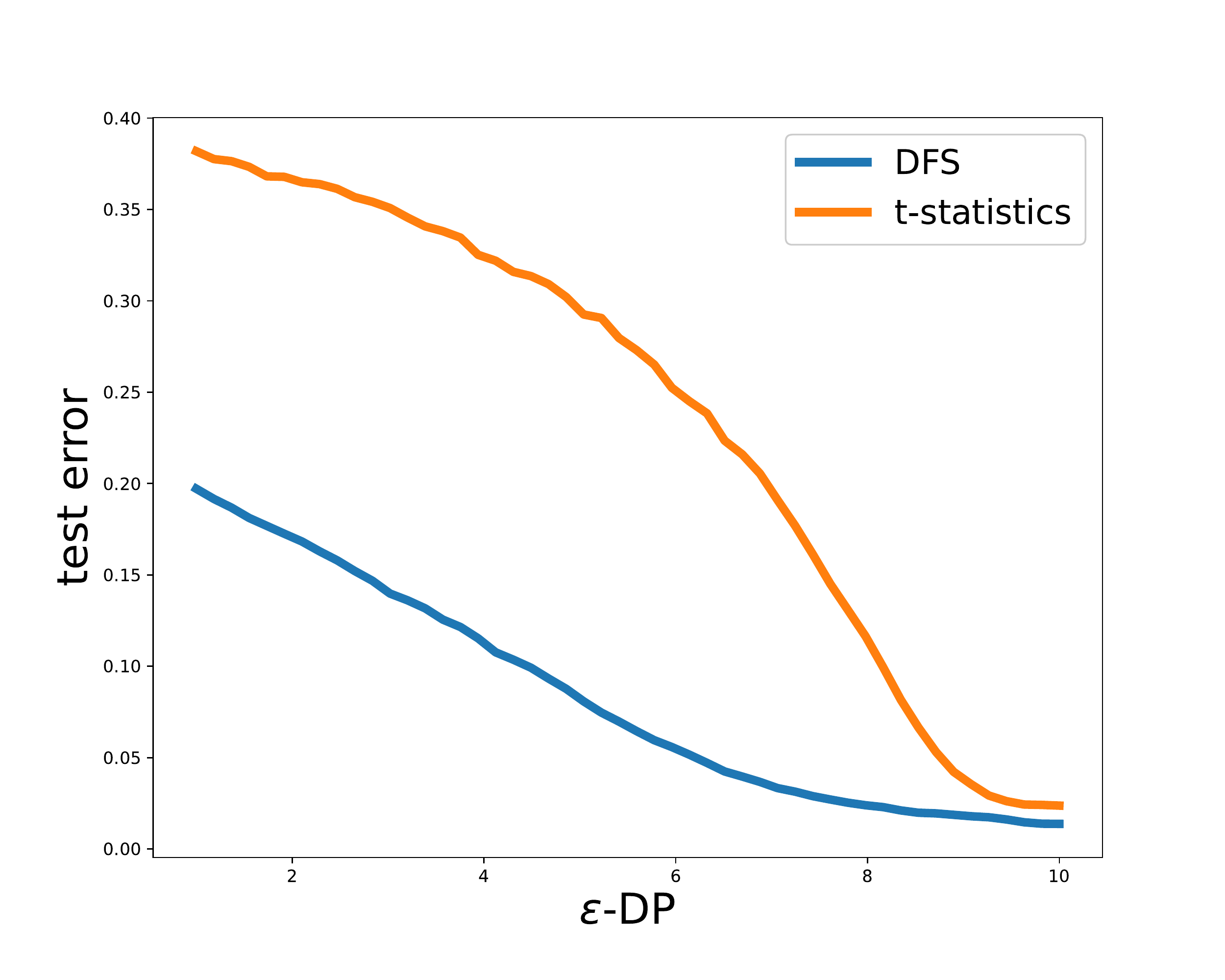}
\caption{fixed $p = 25$ with $\varepsilon$ increasing }\label{fig_cif_cov}
\end{subfigure}
\caption{Results for CIFAR-10}
\end{figure}

For LDA classifier on CIFAR-10, left Fig. \ref{fig_CIF} shows our curve is lower and smoother which means robustness with dimension increasing. Right Fig.\ref{fig_cif_cov} proves when $\varepsilon$ is tiny, noise is large, DFS can perform over t-testing for more than 0.2 in test error.

\subsection{Experiment for MNIST}
\begin{figure}[hbt]
%\hrule %\vspace*{8cm}
\vspace{-0.5cm}
 \centering
\begin{subfigure}[b]{0.45\columnwidth}
\includegraphics[width=\textwidth]{./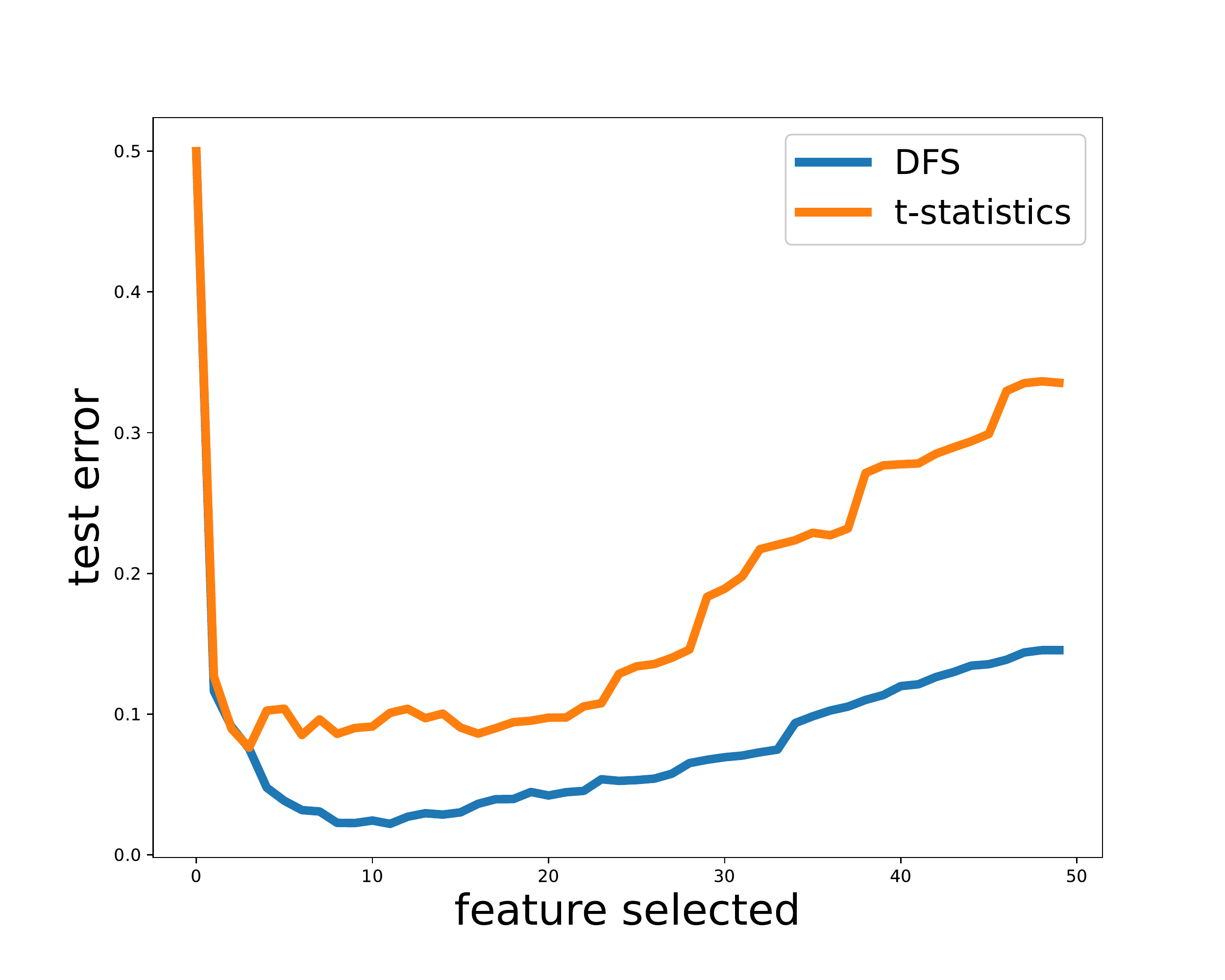}%\hrule
\caption{fixed $\varepsilon = 6$ with $p$ increasing}\label{fig_MNI}
\end{subfigure}
\begin{subfigure}[b]{0.45\columnwidth}
%\hrule %\vspace*{8cm}
\includegraphics[width=\textwidth]{./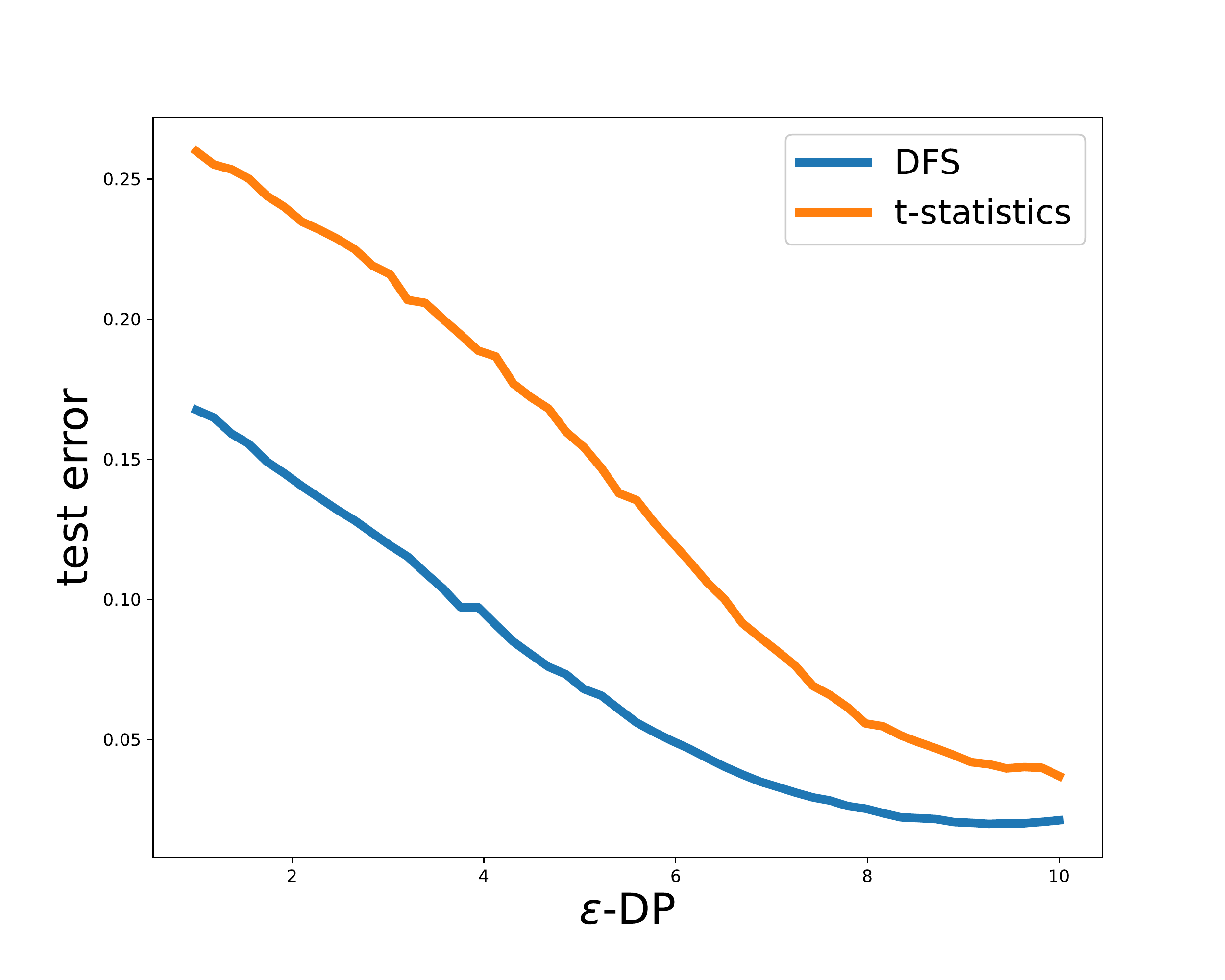}
\caption{fixed $p = 25$ with $\varepsilon$ increasing}\label{fig_mni_cov}
\end{subfigure}

\caption{Results for MNIST}
\end{figure}

Left Fig.\ref{fig_MNI} shows robustness similar to CIFAR-10 since curve is lower and smoother. Right Fig.\ref{fig_mni_cov} shows t-testing is susceptible to DP noise, even $\varepsilon = 10$ would cause error increasing.

% --- uncomment this to read the instructions
% \input{sec/X_instructions}

\end{document}